\newcommand{\bb}{\boldsymbol{b}}
\newcommand{\bc}{\boldsymbol{c}}
\newcommand{\bg}{\boldsymbol{g}}
\newcommand{\bx}{\boldsymbol{x}}
\newcommand{\by}{\boldsymbol{y}}
\newcommand{\br}{\boldsymbol{r}}
\newcommand{\bA}{\boldsymbol{A}}
\newcommand{\bD}{\boldsymbol{D}}
\newcommand{\bI}{\mathbf{I}}
\newcommand{\bP}{\boldsymbol{P}}
\newcommand{\bxi}{\boldsymbol{\xi}}
\newcommand{\bzero}{\mathbf{0}}
\newcommand{\bpi}{\boldsymbol{\pi}}
\newcommand{\bSigma}{\boldsymbol{\Sigma}}
\newcommand{\btheta}{\boldsymbol{\theta}}
\newcommand{\bphi}{\boldsymbol{\phi}}
\newcommand{\bPhi}{\boldsymbol{\Phi}}
\newcommand{\bXi}{\Xi}
\newcommand{\bV}{\boldsymbol{V}}
\newcommand{\bbtheta}{\bar{\btheta}}
\newcommand{\field}[1]{\mathbb{#1}}
\newcommand{\dd}{\text{d}}
\newcommand{\R}{\field{R}}
\newcommand{\E}{\field{E}}
\newcommand{\F}{\mathcal{F}}
\newcommand{\norm}[1]{\left\|{#1}\right\|}
\newcommand{\Ip}[2]{\langle {#1}, {#2} \rangle}
\NewDocumentCommand{\Ec}{o m o}{
  \IfNoValueTF{#3}{\mathbb{E}}{\mathbb{E}_{#3}}
  \IfNoValueTF{#1}
    {\!\left[#2\right]}           
    {\!\left[#2\;\middle|\;#1\right]}            
}
\newcommand{\cmark}{\ding{51}}
\newcommand{\xmark}{\ding{55}}
\theoremstyle{plain}
\newtheorem{theorem}{Theorem}[section]
\newtheorem{proposition}[theorem]{Proposition}
\newtheorem{lemma}[theorem]{Lemma}
\newtheorem{corollary}[theorem]{Corollary}
\theoremstyle{definition}
\newtheorem{assumption}[theorem]{Assumption}
\theoremstyle{remark}
\title{A Robust $\widetilde{\mathcal{O}}(1/\sqrt{T})$ Rate for Unprojected TD Learning with Linear Function Approximation}
\author{Wei-Cheng Lee}
\author{Francesco Orabona}
\date{}
\affil{Department of Computer Science\\
King Abdullah University of Science and Technology (KAUST)}
\affil[ ]{weicheng.frank.lee@gmail.com, francesco@orabona.com}
\begin{document}

\maketitle

\begin{abstract}
  We investigate the finite-time convergence properties of Temporal Difference (TD) learning with linear function approximation, a cornerstone of reinforcement learning.
  We are interested in the so-called ``robust'' setting, where the convergence guarantee does not depend on the potential function's minimal curvature.
  While prior work has established convergence guarantees in this setting, these results typically rely on the artificial assumption that each iterate is projected onto a bounded set. Removing such a condition was left as an open problem by Bhandari et al. (COLT'18), hypothesizing the need for additional ``regularity conditions''. 
  In this paper, we show that the simple unprojected TD(0) converges with a rate of $\widetilde{\mathcal{O}}\left(\frac{\norm{\btheta^*}^2_2}{\sqrt{T}}\right)$ in expectation, even in the presence of Markovian noise. We do not require an additional regularity condition, but only a minor polylog correction to the learning rate. Our analysis reveals a novel self-bounding property of the TD updates and exploits it to guarantee bounded iterates.
\end{abstract}

\section{Introduction}

Temporal Difference (TD) learning~\citep{sutton1988learning} is a cornerstone of modern reinforcement learning. It provides a model-free approach to policy evaluation, estimating the value function of a given policy within a Markov Decision Process (MDP). The versatility of TD methods has led to applications in diverse domains, including games \citep{silver2016mastering}, robotics \citep{gu2017deep}, and autonomous systems \citep{chen2015deepdriving}. At its core, TD learning iteratively updates value function estimates based on the difference between predictions at successive time steps.

Despite its conceptual simplicity and widespread use, the theoretical analysis of TD learning, particularly with linear function approximation in large state spaces, presents considerable challenges. Early seminal work by \citet{tsitsiklis1996analysis} established asymptotic convergence conditions by framing TD as a stochastic approximation algorithm. More recently, understanding the non-asymptotic behavior and finite-time performance of TD has become an active area of research. Challenges primarily arise from the correlated nature of samples generated by the underlying Markov chain, which can introduce bias and dependencies into the learning updates.

Several studies have provided finite-time analyses under various assumptions on the potential function, the projection step, and the stepsize.
In particular, two \emph{complementary} kinds of analyses are known, giving rise to a ``robust'' convergence rate of $\widetilde{\mathcal{O}}(1/\sqrt{T})$\footnote{$\widetilde{\mathcal{O}}$ hides polylogarithmic terms and may also hide dependencies on the mixing time.}~\citep[e.g., ][]{bhandari2018finite, liu2021temporal, sun2021adaptive} or to a ``fast'' rate of $\widetilde{\mathcal{O}}(1/T)$ \citep[e.g., ][]{bhandari2018finite,srikant2019finite,patil2023finite,samsonov2024improved,mitra2024simple,li2025towards}. These two rates are complementary because the hidden constant in the fast rate depends on the inverse square of the curvature of the potential function which, while always present, can be arbitrarily small. Instead, the $\widetilde{\mathcal{O}}(1/\sqrt{T})$ robust rate is independent of the curvature. Hence, in non-asymptotic regimes, the fast rate
can be arbitrarily worse than the robust one.\footnote{See Appendix~\ref{sec:fast_robust} for an in-depth discussion of the literature on this point.} This mirrors what happens in the stochastic approximation setting, and it is well-known that in practice the robust rate can be preferable, as motivated in \citet{nemirovski2009robust}.

\begin{table*}[t]
\centering

\renewcommand{\arraystretch}{1.0}

\caption{Summary of algorithmic inputs and rates for TD(0) with linear function approximation in the literature. The quantity $\omega$ is the minimum eigenvalue of $\bPhi^\top \bD\bPhi$. All quantities are defined in Section~\ref{sec:prelim} and their inputs are discussed in Appendix~\ref{sec:hyper}.}
\label{table:comparison}

\small
\begin{adjustbox}{max width=\linewidth}

\begin{tabular}{ccccc}
\toprule
\multirow{2}{*}{Rate} &
\multirow{2}{*}{Paper} &
\multirow{2}{*}{Inputs} &
\multirow{2}{*}{Without Projection}&
\multirow{2}{*}{Bound independent of $\omega$}\\
\\
\midrule

\multirow{6}{*}{$\widetilde{\mathcal{O}}\left(\frac{1}{T}\right)$}
  & \citet{bhandari2018finite} & $\omega$, $\phi_\infty$ & \xmark & \xmark \\
  & \citet{srikant2019finite}   &  $\omega$, $\alpha$, $\phi_\infty$, $\norm{\btheta^*}_2$ & \cmark & \xmark \\
  & \citet{patil2023finite}     & $\alpha$, $\phi_\infty$& \cmark & \xmark \\
  & \citet{samsonov2024improved}&  $\alpha$, $\phi_\infty$& \cmark & \xmark \\
  & \citet{mitra2024simple}     & $\omega$, $\alpha$, $\phi_\infty$ & \cmark & \xmark \\
  & \citet{li2025towards}     & $\omega$, $\phi_\infty$ & \cmark & \xmark \\
\midrule
\multirow{4}{*}{$\widetilde{\mathcal{O}}\left(\frac{1}{\sqrt{T}}\right)$}
  & \citet{bhandari2018finite} &  $\phi_\infty$& \xmark & \cmark \\
  & \citet{liu2021temporal}    &  $\phi_\infty$& \xmark & \cmark \\
  & \citet{sun2021adaptive}    & {$\phi_\infty$} &  \xmark & \xmark \\
  & \textbf{This paper}, Theorem~\ref{thm:stability}  & { $\phi_\infty$} & \cmark & \cmark \\
\bottomrule
\end{tabular}
\end{adjustbox}
\end{table*}

In this work, we focus on the need for a projection step to achieve robust rates.
In fact, for fast rates, the assumption of minimal curvature leads to a contraction that simplifies the analysis, eliminating the need for a projection. However, there are no known results on unprojected TD achieving the robust $\widetilde{\mathcal{O}}(1/\sqrt{T})$ rate, whereas in practice such a projection is never used.
It is worth stressing that while such a projection step is not used in practice, but only to simplify the theoretical analysis.\footnote{\citet{bhandari2018finite} says ``at this stage, we view this [projection] mainly as a tool that enables clean finite time analysis, rather than a practical algorithmic proposal.''}
Indeed, \citet[Section 11][]{bhandari2018finite} explicitly posed the removal of the projection step as an open problem, in both the fast and robust regime, hypothesizing that it would be possible ``under additional regularity conditions.'' While removing the projection for the fast case was solved after one year by \citet{srikant2019finite}, the unprojected robust case remained unresolved.

\paragraph{Contributions.}

In this paper, we solve one of the open problems posed by \citet{bhandari2018finite}:
For the first time, we provide a finite-time analysis of TD(0) with linear function approximation under Markovian observations \emph{without requiring iterate projection, while achieving a robust rate}. Moreover, we do not require any additional regularity conditions. Instead, we show that changing the learning rate from $\frac{1}{\sqrt{T}}$ or $\frac{1}{\sqrt{t}}$ to $\frac{1}{\sqrt{t}\log t\log T}$ is sufficient to guarantee a self-bounding property of TD: The iterates are constrained, in expectation, to a bounded domain around the optimal solution.
Our analysis differs fundamentally from those that aimed to prove the update is a noisy contraction, and it might be of independent interest.
We also show a convergence rate $\widetilde{\mathcal{O}}(\frac{\norm{\btheta^*}^2_2}{\sqrt{T}})$ for the potential that guides the convergence of the TD algorithm, as defined in \citet{liu2021temporal}. Table~\ref{table:comparison} summarizes\footnote{See Appendix~\ref{sec:hyper} for a precise discussion of the inputs for each algorithm.} and compares our results with existing finite-time analyses of TD with linear function approximation.

\section{Related Work}

The initial theoretical understanding of how TD learning with linear function approximation converges over time was established by \citet{tsitsiklis1996analysis}, who framed TD methods as stochastic approximation algorithms~\citep{kushner2010stochastic}. That work did not derive finite-time convergence rates. Subsequent research~\citep{korda2015td,lakshminarayanan2018linear,dalal2018finite} did provide such rates, but a significant limitation was the assumption that data are drawn independently from the stationary distribution. In practice, data are typically collected sequentially along a single trajectory of the Markov chain, introducing temporal correlations between samples. These correlations make it challenging to analyze even the basic TD(0) method.

\citet{bhandari2018finite} provided the first finite-time analysis of TD learning under more realistic Markovian data, drawing parallels to stochastic gradient descent. However, their analysis, as well as that of \citet{liu2021temporal}, requires a projection step to control the magnitude of iterates/updates. \citet{sun2021adaptive} examined Adam-inspired~\citep{kingma2014adam} adaptive TD variants, but they require a projection as well.
Here, we remove the need to project, while obtaining the robust rate of $\widetilde{\mathcal{O}}(1/\sqrt{T})$ obtained by \citet{bhandari2018finite}.

Another line of work leverages the curvature of the potential function.
This allowed \citet{srikant2019finite} to be the first to provide finite-time error bounds for TD learning with linear function approximation under Markovian sampling without a projection step, employing a control-theoretic approach based on Lyapunov theory. While elegant, the analysis in \citet{srikant2019finite} relies on stepsizes that depend on the strong-convexity (curvature) parameter of the potential function. Since this parameter is typically unknown, their result only implies the existence of a good but unknown learning rate. Subsequently, \citet{patil2023finite} removed the dependence of stepsizes on this strong-convexity parameter, yielding a more practical algorithm, but at the price of requiring a data-dropping variant of TD.\@ Later, \citet{samsonov2024improved} improved the analysis of \citet{patil2023finite} to obtain high-probability bounds. 

In parallel, \citet{mitra2024simple} provided a simpler analysis using an inductive two-step argument, while \citet{li2025towards} utilized an exponentially decaying stepsize to remove the data-dropping steps. Moreover, \citet{sun2022finite} extended the fast analysis to neural networks in the NTK regime. However, in all these results, the non-asymptotic convergence rate can become arbitrarily slow due to ill-conditioned linear mappings. We discuss this caveat in more detail in Section~\ref{sec:comparison}.

Our proof method is fundamentally different from the above ones, which removed the projection by proving a contraction. Instead, we show that the iterates are bounded for reasons analogous to what happens in Stochastic Gradient Descent (SGD). In fact, SGD can have bounded iterates even for non-strongly convex objectives, as shown, for example, by \citet{Xiao10,OrabonaP21,telgarsky2022stochastic,ivgi2023dog} under various update schemes and assumptions on the potential and stepsizes.

Another minor difference with prior work is our choice of the potential function: We study the potential function proposed in \citet{liu2021temporal}, which improves earlier formulations by adding a term proportional to the discount factor $\gamma$.

\section{Notation and Assumptions}\label{sec:prelim}

We briefly review the required background on Markov Decision Processes (MDPs) and TD learning with linear function approximation. For a comprehensive treatment, we refer the reader to \citet{sutton1998reinforcement} and \citet{MannorMT-RLbook}.

In the following, vectors are denoted in bold, and all norms are $\ell_2$ (i.e., Euclidean) norms unless stated otherwise.
\subsection{Discounted Markov Decision Processes}\label{sec:mdp}
We define a \emph{discounted-reward MDP} as a tuple \((\mathcal{S}, \mathcal{A}, P, r, \gamma)\) ... transition kernel \(P\), where \(P(s'\mid s,a)\) denotes the probability of transitioning from $s$ to $s'$ given action $a$. The reward $r(s, s')$ for each transition is bounded by $r_\infty$. Given a trajectory starting at $s_0$, we denote the state at time $t$ by $s_t$, with the reward after transitioning defined as $r_t \coloneqq r(s_t, s_{t+1})$.

\paragraph{Induced Markov chain.}
A stationary policy $\mu:\mathcal{S}\rightarrow \Delta^{|\mathcal{A}|-1}$ induces a Markov chain with transition probabilities
\[
    P^{\mu}(s'\mid s) \;\coloneqq\; \sum_{a\in\mathcal{A}}\mu(a\mid s)\, P(s' \mid s,a), \quad \forall s,s'\in\mathcal{S}.
\]
We let $\bP^{\mu} \in \mathbb{R}^{n \times n}$ denote the corresponding transition matrix, where the entry at row $s$ and column $s'$ is $[\bP^{\mu}]_{s,s'} = P^{\mu}(s'\mid s)$.
Throughout, we denote the expected single-step reward at state \(s\) by
\[
\bar r(s)\coloneqq \sum_{s'}P^\mu(s'\mid s)\,r(s,s').
\]
In this paper, we focus on the task of \emph{policy evaluation}, where the goal is to compute the value function defined as the expected discounted sum of rewards.

\paragraph{Value functions and Bellman operators.}
The value function $\bV^{\mu} \in \mathbb{R}^n$ associated with policy $\mu$ is defined component-wise as
$\bV^{\mu}(s) = \Ec[s_0=s]{\sum_{t=0}^{\infty}\gamma^{t}\,r_t}$, where the expectation is taken over the trajectory generated by $\bP^{\mu}$ starting from $s_0=s$.
The Bellman operator \(\mathcal{T}^{\mu}: \mathbb{R}^n \to \mathbb{R}^n\) is defined as
\[
   (\mathcal{T}^{\mu}\bV)(s) \;\coloneqq\; \bar r(s) + \gamma \sum_{s'\in\mathcal{S}} P^{\mu}(s'\mid s)\, \bV(s'), \quad \forall s\in\mathcal{S}.
\]
The operator \(\mathcal{T}^{\mu}\) is a \(\gamma\)-contraction in the $\ell_\infty$-norm; hence $\bV^{\mu}$ is its unique fixed point.

To study the finite-time behavior of the Markov chain, we impose the following standard ergodic condition.
\begin{assumption}\label{assum:ergodic}
The Markov chain induced by policy $\mu$ is irreducible and aperiodic.
\end{assumption}
Under Assumption~\ref{assum:ergodic}, the Markov chain admits a unique stationary distribution $\pi\in \Delta^{n-1}$ and its vector form $\bpi$ satisfies $\bpi^\top \bP^{\mu} = \bpi^\top$ and mixes geometrically:
\begin{theorem}[{\citealt[Theorem~4.9]{levin2017markov}}]\label{thm:markov_chain_mixing_0}

There exist constants $1<C\leq 2$ and $\alpha\in[1/2,1)$ such that
\[
\max_{s\in\mathcal{S}} \ \bigl\|(P^{\mu})^{t}(\cdot\mid s)-\pi\bigr\|_{\mathrm{TV}}\le C\,\alpha^{t},\quad \forall t\ge0,
\]
where $\|\cdot\|_{\mathrm{TV}}$ denotes the total variation distance and $(P^{\mu})^{t}(\cdot\mid s)$ is the state distribution after $t$ steps starting from state $s$.
\end{theorem}
Based on this, we define the \emph{mixing time} for a tolerance $\epsilon$ as $\tau(\epsilon) \coloneqq \min \left\{ t \in \mathbb{N} \mid C \alpha^t \leq \epsilon \right\}$.

\subsection{TD(0) with Linear Function Approximation}\label{sec:td0}

We consider the approximation of $\bV^{\mu}$ with linear mappings and estimate the weights $\btheta\in\R^d$ via TD learning.

\paragraph{Linear architecture.}
Let $\phi_i:\mathcal{S}\rightarrow\mathbb{R}$ for $i\in\{1,\ldots,d\}$ be the feature mappings. For each $s \in \mathcal{S}$, define the feature vector $\bphi(s) \coloneqq [\phi_1(s), \dots, \phi_d(s)]^\top \in \R^d$. 
We define the feature matrix $\bPhi \in \mathbb{R}^{n \times d}$ such that the row corresponding to state $s$ is $\bphi(s)^\top$. The value function is approximated as $V_{\btheta}(s) \coloneqq \btheta^{\top}\bphi(s)$, or in vector form $\bV_{\btheta} = \bPhi \btheta$.

We recall the following standard assumption on the features.
\begin{assumption}\label{assum:features}
The feature matrix $\bPhi$ has full column rank $d$, and $\|\bphi(s)\|\le\phi_{\infty}$ for\footnote{The knowledge of $\phi_\infty$ can be removed by using, for example, a feature normalization scheme.} all $s\in\mathcal{S}$.
\end{assumption}

\paragraph{TD error and update.}
For \(z=(s,s')\in\mathcal{S}\times\mathcal{S}\), define the TD update map
\begin{equation}
\label{eq:td_update_map}
\bg(\btheta,z)
\coloneqq
\bigl(r(s,s')+\gamma\bphi(s')^\top\btheta-\bphi(s)^\top\btheta\bigr)\bphi(s).
\end{equation}
For \(Z_t\coloneqq(s_t,s_{t+1})\), we write
\[
\bg_t\coloneqq \bg(\btheta_t,Z_t).
\]
Given the weight $\btheta_t$ and a trajectory sample $(s_t,s_{t+1},r_t)$, the TD error is defined as $\delta_t \coloneqq r_t + \gamma V_{\btheta_t}(s_{t+1}) - V_{\btheta_t}(s_t)$. The TD(0) updates as
\begin{align*}
    \btheta_{t+1}
    &=\btheta_t + \eta_t\,\delta_t\,\nabla_{\!\btheta}V_{\btheta_t}(s_t)
    =\btheta_t + \eta_t\,\bigl(
        r_t + \gamma\btheta_t^{\top}\bphi(s_{t+1}) - \btheta_t^{\top}\bphi(s_t)
    \bigr)\bphi(s_t)
    \coloneqq \btheta_t + \eta_t\bg_t,
\end{align*}
where $\eta_t > 0$ is the stepsize.

\paragraph{Projected Bellman equation.}
Under Assumptions~\ref{assum:ergodic}--\ref{assum:features} and suitable $\eta_t$, TD(0) converges to $\btheta^{*}$ asymptotically and $\btheta^*$ is characterized as the unique solution to the projected Bellman equation~\citep{tsitsiklis1996analysis} 
$\bPhi\btheta^{*} = \Pi_{\bD}\,\mathcal{T}^{\mu}(\bPhi\btheta^{*})$, where $\Pi_{\bD}$ is the orthogonal projection operator onto the subspace $\{\bPhi \bx \mid \bx \in \R^d\}$ with respect to the $\bD$-norm defined below.

\paragraph{D-norms.}
Let $\bD \coloneqq \operatorname{diag}(\boldsymbol{\pi})$. Since $\bD \succ \bzero$, for $\bx,\by\in\mathbb{R}^{n}$, we can define the inner product $\langle \bx,\by\rangle_{\bD} \coloneqq \bx^{\top}\bD\by$ and the norm $\|\bx\|_{\bD} \coloneqq \sqrt{\langle \bx,\bx\rangle_{\bD}}$. 

\paragraph{Dirichlet-seminorms.}
The \emph{Dirichlet seminorm}~\citep{diaconis1996logarithmic,ollivier2018approximate,liu2021temporal} is defined as
\begin{equation}
    \label{eq:dirichlet_seminorm}
    \|\bV\|_{\mathrm{Dir}}^{2}
    \;\coloneqq\;
    \frac12
    \sum_{s,s' \in \mathcal{S}}\pi(s)\,P^{\mu}(s'\mid s)\bigl(V(s')-V(s)\bigr)^{2}.
\end{equation}
Since $\bPhi$ is full column rank by Assumption~\ref{assum:features}, the matrix $\bPhi^{\top}\bD\bPhi$ is positive definite. Thus, the minimum eigenvalue $\omega\coloneqq\lambda_{\min}(\bPhi^{\top}\bD\bPhi)$ is strictly positive. This quantity $\omega$ plays the role of a strong-convexity (curvature) parameter in fast-rate analyses.

\paragraph{Stationary update.}
The asymptotic behavior of TD(0) is closely tied to the stationary update field
\[
    \bar{\bg}(\btheta)
    \;\coloneqq\;
    \mathbb{E}\!\left[
        \bigl(r(s,s') + \gamma\bphi(s')^{\top}\btheta - \bphi(s)^{\top}\btheta\bigr)\bphi(s)
    \right].
\]
Here, the expectation is taken over stationary transitions where $s \sim \pi$ and $s' \sim P^{\mu}(\cdot\mid s)$. In particular, $\bar{\bg}(\btheta^{*}) = \mathbf{0}$.
\section{Unprojected Temporal Difference Learning}
\label{sec:main_thm}
\begin{algorithm}[t]
    \caption{Unprojected TD(0) with linear function approximation}
    \label{alg:PfTD}
    \begin{algorithmic}[1]
        \STATE \textbf{Input:} iteration budget $T$, $\phi_\infty$, $c>15+18\sqrt{2}$, $s_0$
        \STATE $\btheta_0=\bzero$
        \FOR{$t=0,\dots,T-1$}
            \STATE Receive trajectory sample $(s_t, s_{t+1}, r_t)$
            \STATE $\bg_t = \left(r_t+\gamma\bphi(s_{t+1})^\top\btheta_t-\bphi(s_t)^\top\btheta_t\right)\bphi(s_t)$
            \STATE Set the stepsize
            \[
            \eta_t =
            \frac{1}
            {
            c\,\phi_\infty^2\,\log(T)\,\log(t+3)\,\sqrt{t+1}
            }
            \]
            \STATE $\btheta_{t+1} = \btheta_{t} + \eta_{t} \bg_t$
        \ENDFOR    
        \STATE \textbf{Output:} $\bar{\btheta}_T \coloneqq \left(\sum_{k=0}^{T-1}\eta_k\right)^{-1}\sum_{k=0}^{T-1}{\eta_k}\btheta_k$
    \end{algorithmic}
\end{algorithm}

In this section, we present our main result:
We analyze TD(0) without any projection, as shown in Algorithm~\ref{alg:PfTD}, and present a robust convergence result for it.

First, we explain what exactly our convergence is.
In prior work~\citep[e.g.,][]{bhandari2018finite}, the potential function for the convergence analysis was
\begin{equation}
\label{eq:old_obj}
\norm{\bV_{\bar \btheta_T}-\bV_{\btheta^*}}^2_{\bD}.
\end{equation}
Note that when the discount factor $\gamma\rightarrow 1$, their rate $\widetilde{\mathcal{O}}\left(\frac{\norm{\btheta^*}^2}{(1-\gamma)\sqrt{T}}\right)$ loses its utility in characterizing the error of estimates $\bar \btheta_T$ to $\btheta^*$ in $T$.
For this reason, we focus on the better potential function proposed by \citet{liu2021temporal}:
\[
f(\btheta)
\coloneqq (1-\gamma)\norm{\bV_{\btheta}-\bV_{\btheta^*}}^2_{\bD}+\gamma \norm{\bV_{\btheta}-\bV_{\btheta^*}}^2_{\mathrm{Dir}}.
\]
Clearly, the point $\btheta^*$ minimizes $f(\btheta)$.
Moreover, any result obtained using this potential with $(1-\gamma)^{-1}$ scaling implies those obtained using \eqref{eq:old_obj}  since $\norm{\bV_{\btheta}-\bV_{\btheta^*}}^2_{\mathrm{Dir}}$ is non-negative. Finally, this potential provides convergence results even when the discount factor $\gamma\rightarrow 1$; see the discussion in \citet{liu2021temporal}.

From a technical point of view, the advantage of this potential is that the stationary update $\bar{\bg}(\btheta)$ satisfies the equality in the following theorem.
In a sense, the results in \citet{liu2021temporal} indicate that this is the ``correct" potential function for TD(0).
\begin{lemma}{\citep[Theorem~1]{liu2021temporal}}
\label{lem:gradient_splitting}
For any $\btheta\in\R^d$, we have
\[
\Ip{-\bar{\bg}(\btheta)}{\btheta-\btheta^*} = f(\btheta)-f(\btheta^*)\,.
\]
\end{lemma}
Using the above potential function, the following theorem shows that the TD(0) algorithm can converge with a rate of $\widetilde{\mathcal{O}}(\norm{\btheta^*}^2/\sqrt{T})$ without projections.
\begin{theorem}
\label{thm:stability}
Consider the weighted average iterate $\bar{\btheta}_T $ generated by Algorithm~\ref{alg:PfTD}. 
Suppose the stepsize parameter $c$ satisfies $c > c_0 \coloneqq 15+18\sqrt{2}$, and the number of iterations $T$ satisfies $\log T\geq \frac{2}{\log^3(1/\alpha)}$. Then, we have, for any $t \leq T$,
\[
    \Ec{\norm{\btheta_t}^2} \leq \rho^2_c \max\left\{\frac{r^2_{\infty}}{\phi^2_\infty}, \norm{\btheta^*}^2\right\},
    \]
    where $\rho_c \to 2$ as $c \to \infty$, and $\rho_c = \mathcal{O}\big(\frac{1}{c-c_0}\big)$ as $c \downarrow c_0\,$.
 
\end{theorem}
\paragraph{Where the threshold on $c$ come from.}
The threshold $c>c_0=15+18\sqrt{2}$ should be viewed as a conservative sufficient condition for the proof, not as a practical tuning rule. This constant comes from the self-bounding induction used to prove bounded iterates for $t\leq T$. More specifically, it accumulates from Theorem~\ref{thm:markov_chain_mixing_0}, the use of the Integral test, repeated uses of the triangle inequality, Cauchy--Schwarz, and AM--GM type inequalities. Thus, $c$ is chosen large enough so that all these proof-level constants can be absorbed and the induction closes. The radius multiplier $\rho_c$ is determined only by $c$. In particular, $\rho_c$ does not depend on the mixing parameter $\alpha$ or the discount factor $\gamma$.
\begin{corollary} \label{cor:rate} Under the same assumptions as Theorem~\ref{thm:stability}, we have \[ f(\bar{\btheta}_T) - f(\btheta^*) = \widetilde{\mathcal{O}}\left( \frac{ c \rho_c^2 \max\left\{ r_{\infty}^2, \phi_{\infty}^2 \norm{\btheta^*}^2 \right\} }{\sqrt{T}} \right). \] Here, $\widetilde{\mathcal{O}}$ hides only logarithmic factors in $T$; in particular, it hides no dependence on the mixing constant $\alpha$ or the discount factor $\gamma$.\end{corollary}

For lack of space, we only give the proof sketch of Theorem~\ref{thm:stability} in Section~\ref{sec:proof_sketch}, where we explain the main steps of the proof and contrast it with previous proofs.
The full proof of Theorem~\ref{thm:stability} is given in Appendix~\ref{thm:bounded_iterates_formal}, while the proof of Corollary~\ref{cor:rate} is given at the end of Section~\ref{proof:con}.  Theorem~\ref{thm:bounded_iterates_formal} will also detail how the stepsize parameter $c$ determines the radius multiplier $\rho_c$. In short, the number $c$ must exceed a threshold to ensure bounded iterates, and increasing $c$ decreases $\rho_c$.

It is worth stressing that knowing $T$ in advance is not a limitation, because one can use a standard doubling trick (see Appendix~\ref{sec:doubling_trick}).

\begin{figure*}[ht]
  \centering
  \includegraphics[width=.99\textwidth]{./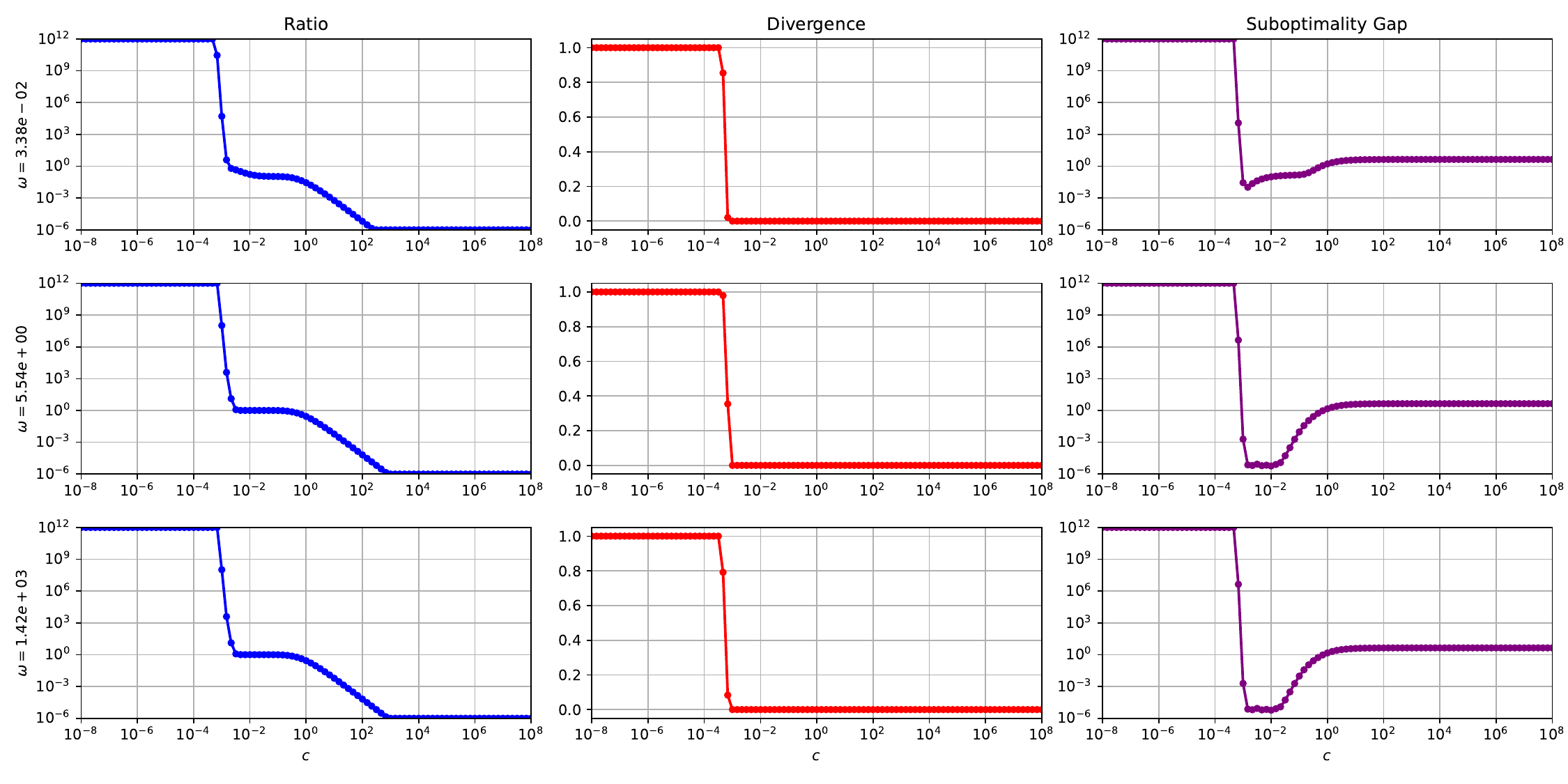}
  \caption{
  Instability of TD learning.
  Columns: boundedness ratio, divergence rate, suboptimality gap vs. stepsize scale (c). Rows: different feature scalings (changing the spectrum of $\bPhi^\top \bD\bPhi$).
  }
  \label{fig:td-threshold-ring}
\end{figure*}

\paragraph{Dependency on $\alpha$ for $T$?}

The condition on $T$ in Theorem~\ref{thm:stability} is used to absorb the mixing-related terms arising from Markovian sampling, thereby ensuring that the induction closes. Since $\alpha$ is typically unknown, one may question the practicality of this condition. First, we emphasize that this is a limitation of all analyses in this line of work, although it is sometimes hidden. Indeed, all prior robust bounds yield a rate worse than $\widetilde{\mathcal{O}}(1/\sqrt{T})$ unless $T$ is sufficiently large as a function of $\alpha$. This point appears to be poorly discussed in the literature, so we devote Appendix~\ref{sec:mixing_discussion} to proving it. It is possible to state our result in an anytime form, but at the very least, the stepsize would then depend on unknown $\alpha$; we therefore prefer a more executable presentation of the assumptions and results.

Notice that even for fast rates, one either needs a sufficiently small stepsize that depends on $\omega$ (an unknown quantity), or must use $\omega$-agnostic algorithms~\citep[e.g.,][]{patil2023finite,samsonov2024improved}, which rely on data-dropping steps that still require knowledge of $\alpha$; see Table~\ref{table:comparison}.

\paragraph{Is the threshold on the stepsizes real?}
Theorem~\ref{thm:stability} gives a sufficient condition on $c$ to have bounded iterates and convergence. First of all, we would like to stress that we did not try to optimize the numerical value of the threshold on $c$, for the simple reason that any similar analysis with such weak assumptions cannot be predictive of reality.

Instead, the more interesting question is to check if the condition is necessary too. That is, does TD(0) without projection have bounded iterates in finite time with arbitrary stepsizes satisfying the condition in \citet{tsitsiklis1996analysis}? To test this effect, we conducted an experiment in which we ran TD(0) on a synthetic problem (details in Appendix~\ref{sec:exp}). In Figure~\ref{fig:td-threshold-ring}, first column, we show the expected boundedness ratio, defined as $\frac{\max_{t\leq T} \mathbb{E}\left[\|\btheta_t\|^2\right]}{\|\btheta^*\|^2}$, which is large if the iterates blow up. The second column shows the divergence rate, that is, the fraction of runs with $\|\btheta_t\|^2>10^{12}$, while the third column shows the suboptimality gap.
Overall, these results suggest that the threshold on the stepsizes captures a genuine finite-time stability effect. In fact, from both the expected boundedness ratio and the divergence rate, it is clear that if $c$ is too small, the iterates of the algorithm are not controlled in finite time.
Moreover, the explosion in both of these measures is qualitatively consistent with the theoretical behavior encoded by our function $\rho_c$ in Theorem~\ref{thm:stability}.

The rows correspond to different spectral characteristics of $\bPhi^\top \bD\bPhi$.
Hence, changing the spectral characteristics does not change much the iterates, as our theory suggests. However, due to the complementarity of the robust and fast rates, it does influence the suboptimality gap, suggesting that a robust rate of $\widetilde{\mathcal{O}}(1/\sqrt{T})$ is pessimistic when the strong convexity is large.

In Appendix~\ref{sec:exp}, we also report experiments with a fixed stepsize that show the same behaviors.

\subsection{Detailed Comparison with Previous Results}
\label{sec:comparison}

Here, we highlight a few technical differences between our analysis and existing finite-time results for TD(0) with linear function approximation.

\paragraph{Comparison with robust $\widetilde{\mathcal{O}}(1/\sqrt{T})$ rates with projections.}
\citet[Theorem~3.(a)]{bhandari2018finite} proves that \emph{projected} TD with stepsize $\eta_t=\frac{1}{\sqrt{T}}$ satisfies
\[
\Ec{\|\bV_{\bar\btheta_T}-\bV_{\btheta^*}\|^2_{\bD}}
=\widetilde{\mathcal{O}}\!\left(\frac{R^2}{(1-\gamma)\sqrt{T}}\right),
\]
where $R$ is the projection radius and requires $R\ge \|\btheta^*\|$.
Our bound implies the same type of $D$-norm guarantee, since $\|\cdot\|_{\mathrm{Dir}}^2\ge 0$. Also, their choice of the learning rate and ours have the same dependency on $T$, up to polylogarithmic terms, that is, we do not gain more stability by using a much smaller learning rate, but rather with a refined analysis.
Moreover, our bound depends explicitly on $\|\btheta^*\|$ rather than on an a priori radius $R$.

We stress once again that the use of a projection step in \citet{bhandari2018finite} is only for the purpose of analysis, but not a real practical possibility.
In fact, while one can obtain a (potentially very loose) upper bound on $\|\btheta^*\|$ in terms of $\omega=\lambda_{\min}(\bPhi^\top\bD\bPhi)$~\citep[Lemma~1]{bhandari2018finite}, this approach is impractical from an algorithm design perspective. In fact, the goal of TD learning is precisely to avoid the computational complexity that scales with the number of states $n$, which is highly non-trivial when estimating $\omega$ involves computing the stationary matrix $\bD$.

\paragraph{Comparison with fast $\widetilde{\mathcal{O}}(1/T)$ rates.}
It is instructive to compare our results with the known fast rates, to underline their complementarity.
In the fast regime~\citep{srikant2019finite,patil2023finite,mitra2024simple,samsonov2024improved,li2025towards}, contraction-based arguments yield bounds of the form
\[
\Ec{\|\bV_{\bar\btheta_T}-\bV_{\btheta^*}\|^2_{\bD}}
=\widetilde{\mathcal{O}}\!\left(\frac{\|\btheta^*\|^2}{(1-\gamma)^2\,\omega^2\,T}\right)~.
\]
Some algorithms choose stepsizes without prior knowledge of $\omega$~\citep{patil2023finite,samsonov2024improved}, matching the same information passed to the analyses achieving a robust rate. Nevertheless, in non-asymptotic regimes, the fast guarantee can be much worse than an $\omega$-independent $\widetilde{\mathcal{O}}(1/\sqrt{T})$ guarantee, because $\omega$ can be arbitrarily small.
Indeed, ignoring logarithmic factors, the fast bound only improves over $\widetilde{\mathcal{O}}(\|\btheta^*\|^2/((1-\gamma)\sqrt{T}))$ when $T \gtrsim 1/((1-\gamma)^2\omega^4)$. We now show that one can construct problems where $\omega$ is arbitrarily small.

Consider a two-state Markov chain with states $s_1$ and $s_2$, and transition matrix
\[
\bP = \begin{bmatrix} \frac{1-\alpha}{2} & \frac{1+\alpha}{2}  \\
\frac{1+\alpha}{2} & \frac{1-\alpha}{2} \end{bmatrix},
\qquad \frac 1 2<\alpha<1~.
\]
Then, this chain is irreducible and aperiodic, and its stationary distribution is uniform: $\bD = \frac12 \mathbf{I}$.
Now consider the feature matrix
\[
\bPhi = \begin{bmatrix} \epsilon & 1  \\
-\epsilon & 1 \end{bmatrix},
\qquad 0<\epsilon<1~.
\]
The matrix $\bPhi$ is full column rank, and $\phi_\infty = \sqrt{2}$. Moreover, a direct calculation yields  
\[
\bPhi^\top \bD\bPhi = \begin{bmatrix} \epsilon^2 & 0  \\
0 & 1 \end{bmatrix}~.
\]
Consequently, $\omega=\lambda_{\min}\left(\bPhi^\top\bD\bPhi\right)=\epsilon^2$.
By choosing $\epsilon$ arbitrarily small, the curvature parameter 
$\omega$ can be made arbitrarily close to zero. Importantly, this degeneration arises solely from the feature representation: for different values of the mixing-rate parameter \(\alpha\), the stationary distribution and $\omega$ remain unchanged.

\section{Proof Sketch and Differences with Prior Proofs}
\label{sec:proof_sketch}

The fact that the iterates are bounded in this explicit form is both a novel contribution and a crucial ingredient in our analysis. We now explain how this is obtained, contrasting it with previous approaches.

\paragraph{Prior work: Controlling the iterates with curvature.}
The standard approach to analyze TD(0) starts from the following recursion~
\citep{bhandari2018finite,srikant2019finite,sun2021adaptive,patil2023finite,mitra2024simple,samsonov2024improved,li2025towards}:
\begin{align*}
    \norm{\btheta_{t}-\btheta^*}^2&=\norm{\btheta_{t-1} + \eta_{t-1}\bg_{t-1} -\btheta^*}^2
     = \norm{\btheta_{t-1}-\btheta^*}^2+2\eta_{t-1} \Ip{\bg_{t-1}}{\btheta_{t-1}-\btheta^*}
    +\eta_{t-1}^2\norm{\bg_{t-1}}^2\\
    & = \norm{\btheta_{t-1}-\btheta^*}^2+  2\eta_{t-1}\Ip{\bar{\bg}(\btheta_{t-1})}{\btheta_{t-1}-\btheta^*}+\eta_{t-1}^2\norm{\bg_{t-1}}^2 \\
    &\quad+ 2\eta_{t-1} \Ip{\bg_{t-1}-\bar{\bg}(\btheta_{t-1})}{\btheta_{t-1}-\btheta^*}~. \label{eq:starting_point}
\end{align*}
Then, in the fast regime~\citep{srikant2019finite,patil2023finite,samsonov2024improved,li2025towards}, one can use the following lemma:
\begin{lemma}{\citep[Lemma 1]{mitra2024simple}}
\label{lem:eigenvalue}
\[
\Ip{\bar{\bg}(\btheta)}{\btheta-\btheta^*} \leq -\omega(1-\gamma)\norm{\btheta-\btheta^*}^2, \quad \forall \btheta \in \R^d.
\]
\end{lemma}
Then, plugging this into the recursion yields
\begin{align*}
    \norm{\btheta_{t}-\btheta^*}^2
     &\leq (1-2\eta_{t-1}\omega(1-\gamma))\norm{\btheta_{t-1}-\btheta^*}^2 +\eta_{t-1}^2\norm{\bg_{t-1}}^2 + 2\eta_{t-1} \Ip{\bg_{t-1}-\bar{\bg}(\btheta_{t-1})}{\btheta_{t-1}-\btheta^*}~.
\end{align*}
One then proceeds to control the gradient term $\norm{\bg_{t-1}}^2$ and bias term $\Ip{\bg_{t-1}-\bar{\bg}(\btheta_{t-1})}{\btheta_{t-1}-\btheta^*}$ to form the following standard pseudo-contraction:
\begin{align*}
    \Ec{\norm{\btheta_{t}-\btheta^*}^2}
    &= (1-2\eta_{t-1}\omega(1-\gamma))\Ec{\norm{\btheta_{t-1}-\btheta^*}^2} + \mathcal{O}\left(\eta_{t-1}^2 \norm{\btheta^*}^2\right),
\end{align*}
Unrolling the recursion yields $\Ec{\norm{\bV_{\btheta^*}-\bV_{\bar{\btheta}_T}}^2_{\bD} }= \widetilde{\mathcal{O}}(\frac{\norm{\btheta^*}^2}{\omega^2(1-\gamma)^2T})$.
Notice that there is no need for projection steps in this line of analysis, thanks to the contraction property, which keeps the iterates bounded. However, in this approach, the rate depends on $\omega$, which could be arbitrarily small, as we show in Section~\ref{sec:comparison}.

\paragraph{Prior Work: Controlling the iterates with projections.}
To avoid the dependency on $\omega$, we can analyze TD(0) using Lemma~\ref{lem:gradient_splitting} instead of Lemma~\ref{lem:eigenvalue}. The analysis starts from controlling the magnitude of the gradient $\norm{\bg_{t-1}}$ using the following lemma:
\begin{lemma}{\citep[Lemma 6]{bhandari2018finite}}
\label{lem:gradient_bound}
For \(Z_t=(s_t,s_{t+1})\), for all \(\btheta\in\R^d\),
\[
\norm{\bg(\btheta,Z_t)}
\leq
r_{\infty}\phi_\infty+2\phi^2_\infty\norm{\btheta}\,.
\]
\end{lemma}
Hence, the term $\norm{\btheta_{t-1}}$ upper bounds the magnitude of the gradient $\norm{\bg_{t-1}}$. Since $\btheta_{t-1}=\btheta_{t-2}+\eta_{t-2}\bg_{t-2}$, the term $\norm{\btheta_{t-1}}$ in turn depends on both $\norm{\btheta_{t-2}}$ and $\norm{\bg_{t-2}}$. This recursive dependence can create a vicious cycle, leading to an explosion in $\norm{\btheta_t}$ driven by the stochasticity of $\bg_{t-1}$. Previous analyses~\citep{bhandari2018finite,liu2021temporal} avoid this problem by imposing an artificial projection step, which guarantees $\norm{\btheta_t}\leq R$ for all $t$, where $R$ is chosen agnostically to be larger than $\norm{\btheta^*}$.
Under this constraint, we have a uniform control over the magnitude of $\norm{\bg_t}$ by $r_{\infty}\phi_\infty + 2\phi_\infty^2 R$ for all $t$, which can be seen as a bounded gradients condition in optimization.

\paragraph{Our Approach: Controlling the iterates \emph{without} projections.}
Now, we explain our proof method, which eliminates the need for a projection.

Let's first give our argument in a nutshell. We first ignore the presence of Markovian noise. Next, assuming the iterates are bounded up to time $t-1$, the next TD learning update is also bounded; this allows us to show that the next iterate remains bounded for a carefully chosen stepsize. This implication naturally suggests an inductive proof. For previously ignored Markovian noise, our strategy is to use the geometric convergence in Theorem~\ref{thm:markov_chain_mixing_0}. Thus, we work with the stationary update field $\bar{\bg}$ and control the associated error term. By combining all these steps, we obtain the stated result.

Let's now look at the details. We decompose the updates as follows:
\begin{align*}
    \btheta_{t}
    = \btheta_{t-1} + \eta_{t-1} \bg_{t-1}
    &= \btheta_{t-1} + \eta_{t-1}\left(
   \bg_{t-1}-\E[\bg_{t-1}\mid \F_{t-2}]\right)\\
       &\quad + \eta_{t-1}\left(\E[\bg_{t-1}\mid \F_{t-2}] - \bar{\bg}(\btheta_{t-1})
        + \bar{\bg}(\btheta_{t-1})\right).
\end{align*}
Notice that $\bxi_{t-1}\coloneqq \bg_{t-1}-\E[\bg_{t-1}\mid \F_{t-2}]$ is a martingale difference sequence with respect to $\mathcal{F}_{t-1}\coloneqq \sigma(s_0,\ldots,s_t)$, and $\bb_{t-1}\coloneqq \E[\bg_{t-1}\mid \F_{t-2}] - \bar{\bg}(\btheta_{t-1})$ is the gradient bias term.
Then, we have
\begin{align*}
   \|\btheta_{t}-\btheta^*\|^2
   &=\norm{\btheta_{t-1} + \eta_{t-1} (\bxi_{t-1}+\bb_{t-1}+\bar{\bg}(\btheta_{t-1}))-\btheta^*}^2
     \\&= \norm{\btheta_{t-1}-\btheta^*}^2 
     + 2\eta_{t-1}\Ip{\bxi_{t-1}+\bb_{t-1}+\bar{\bg}(\btheta_{t-1})}{\btheta_{t-1}-\btheta^*}\\
    &\leq  \norm{\btheta_{t-1}-\btheta^*}^2 +2\eta_{t-1}\Ip{\bxi_{t-1}+\bb_{t-1}}{\btheta_{t-1}-\btheta^*}+3\eta^2_{t-1}\norm{\bxi_{t-1}}^2\\
    &\quad +3\eta^2_{t-1}\norm{\bb_{t-1}}^2 +3\eta^2_{t-1}\norm{\bar{\bg}(\btheta_{t-1})}^2,
\end{align*}
where we use Lemma~\ref{lem:gradient_splitting} in the last inequality.
Taking expectation and telescoping gives
\begin{align}
    \mathbb{E}&\left[\norm{\btheta_{t}-\btheta^*}^2\right]
    \leq  2\Ec{\sum_{k=0}^{t-1}\eta_k\Ip{\bb_k}{\btheta_k-\btheta^*}}+ \norm{\btheta^*}^2 + 3\Ec{\sum_{k=0}^{t-1}\eta^2_k(\norm{\bxi_k}^2+\norm{\bb_k}^2+\norm{\bar{\bg}(\btheta_k)}^2)}~. \label{eq:telescoping}
\end{align}
Our analysis follows from controlling the gradient bias term $\bb_{k}$ in the update.
The difficulty in analyzing $\bb_{k}$ comes from $\bg_{k}$ is not an unbiased estimate of $\bar{\bg}(\btheta_{k})$.

To be more precise, recall \(Z_k\coloneqq(s_k,s_{k+1})\). Using the TD update map \(\bg\) defined in \eqref{eq:td_update_map}, in general we have
\[
\mathbb{E}\!\left[\bg(\btheta_k,Z_k)\,\middle|\,\mathcal{F}_{k-1}\right]
\neq
\bar{\bg}(\btheta_k).
\]
This is because \(Z_k\) depends on \(Z_{k-1}\in\F_{k-1}\), and the conditional law
\(\mathcal{L}(Z_k\mid Z_{k-1})\) is not necessarily equal to the stationary law.

The key idea for decoupling this dependency is that it converges to the stationary distribution geometrically fast. So, for large \(k^\prime\), the conditional law
\(\mathcal{L}(Z_k\mid Z_{k-k^\prime})\)
is close to the stationary law. This is characterized by the following lemma, whose proof is
in Appendix~\ref{sec:proof_lem_total_variation}.
\begin{lemma}
\label{lem:go_back}
For any $0\leq k^\prime\leq k$, let $\ell_{k-k^\prime}\coloneqq r_{\infty}{\phi_{\infty}}+2\phi^2_{\infty}\norm{\btheta_{k-k^\prime}}\,$, then, we have with probability $1$,
\[
\norm{\Ec[\mathcal{F}_{k-k^\prime-1}]{\bg(\btheta_{k-k^\prime},Z_{k})-\bar{\bg}(\btheta_{k-k^\prime})}}
\leq 2 \ell_{k-k^\prime} C\alpha^{k^\prime}.
\]
\end{lemma}
Thus, we can decompose the scalar bias term appearing in
\eqref{eq:telescoping}. For any $0\leq k^\prime\leq k$, we have
\begin{align*}
\Ec{\Ip{\bb_k}{\btheta_k-\btheta^*}}
&=
\Ec{
\Ip{
\Ec[\F_{k-1}]{\bg(\btheta_{k-k^\prime},Z_k)}
-
\bar{\bg}(\btheta_{k-k^\prime})
}{
\btheta_{k-k^\prime}-\btheta^*
}
}
\\
&\quad+
\Ec{
\Ip{
\Ec[\F_{k-1}]{\bg(\btheta_k,Z_k)}
-
\bar{\bg}(\btheta_k)
}{
\btheta_k-\btheta^*
}
}
\\
&\quad-
\Ec{
\Ip{
\Ec[\F_{k-1}]{\bg(\btheta_{k-k^\prime},Z_k)}
-
\bar{\bg}(\btheta_{k-k^\prime})
}{
\btheta_{k-k^\prime}-\btheta^*
}
}.
\end{align*}
The first term is where Lemma~\ref{lem:go_back} is applied. Since
$\btheta_{k-k^\prime}-\btheta^*$ is
$\F_{k-k^\prime-1}$-measurable and
$\F_{k-k^\prime-1}\subseteq\F_{k-1}$, the conditional expectation property gives
\begin{align}
&\Ec{
\Ip{
\Ec[\F_{k-1}]{\bg(\btheta_{k-k^\prime},Z_k)}
-
\bar{\bg}(\btheta_{k-k^\prime})
}{
\btheta_{k-k^\prime}-\btheta^*
}
}
\nonumber\\
&=
\Ec{
\Ip{
\Ec[\F_{k-1}]{
\bg(\btheta_{k-k^\prime},Z_k)
-
\bar{\bg}(\btheta_{k-k^\prime})
}
}{
\btheta_{k-k^\prime}-\btheta^*
}
}
\nonumber\\
&=
\Ec{
\Ip{
\Ec[\F_{k-k^\prime-1}]{
\bg(\btheta_{k-k^\prime},Z_k)
-
\bar{\bg}(\btheta_{k-k^\prime})
}
}{
\btheta_{k-k^\prime}-\btheta^*
}
}
\nonumber\\
&\leq
\Ec{
\norm{\btheta_{k-k^\prime}-\btheta^*}
\norm{
\Ec[\F_{k-k^\prime-1}]{
\bg(\btheta_{k-k^\prime},Z_k)
-
\bar{\bg}(\btheta_{k-k^\prime})
}
}
}
\nonumber\\
&\leq
2C\alpha^{k^\prime}
\Ec{
\norm{\btheta_{k-k^\prime}-\btheta^*}
\ell_{k-k^\prime}
}
.
\label{eq:ell_bound}
\end{align}
Here, the second equality follows from the fact that
$\btheta_{k-k^\prime}-\btheta^*$ is
$\F_{k-k^\prime-1}$-measurable; equivalently, for this inner product, we may
move the outer conditional expectation from $\F_{k-1}$ back to
$\F_{k-k^\prime-1}$. The first inequality uses Cauchy's inequality, and the second uses Lemma~\ref{lem:go_back}.

It remains to control the difference between the current scalar bias term and
the delayed scalar bias term using the following lemma, whose proof is also
in Appendix~\ref{sec:proof_lem_total_variation}.
\begin{lemma}
\label{lem:lipschitz_TD}
Fix any $0\leq k^\prime\leq k\leq T-1$, then for any $\btheta_k$, $\btheta_{k^\prime}$ and $Z_k$, if we assume $\sup_o\norm{\bg(\btheta_k,o)}\leq \ell_k$, we have
\begin{align*}
|\bXi(\btheta_k,Z_k)-\bXi(\btheta_{k-k^\prime},Z_k)|
&\leq (2\ell_k+4\phi_{\infty}^2\norm{\btheta_{k-k^\prime}-\btheta^*})\norm{\btheta_k-\btheta_{k-k^\prime}},
\end{align*}
where $\bXi(\btheta,Z_t) \coloneqq\Ip{\Ec[\mathcal{F}_{t-1}]{\bg(\btheta,Z_t)}-\bar{\bg}(\btheta)}{\btheta-\btheta^*}$.
\end{lemma}

A comprehensive proof for controlling gradient bias terms for all $k\leq t$ can be found in Appendix~\ref{lem:bias}.

Combining the scalar bound \eqref{eq:ell_bound} with Lemma~\ref{lem:lipschitz_TD}, we have
\[
\Ec{
\sum_{k=0}^{t-1}
\eta_k
\Ip{\bb_k}{\btheta_k-\btheta^*}
}
=
\mathcal{O}\left(
\max_{i\leq t-1}\Ec{\norm{\btheta_i}^2}
+
\norm{\btheta^*}^2
\right).
\]
Plugging this into equation~\eqref{eq:telescoping} and controlling the other
gradient-like terms with Lemma~\ref{lem:gradient_bound}, we conclude that
$\Ec{\norm{\btheta_t-\btheta^*}^2}$ is also of order
\[
\mathcal{O}\left(
\max_{i\leq t-1}\Ec{\norm{\btheta_i}^2}
+
\norm{\btheta^*}^2
\right).
\]
Notice that
$\Ec{\norm{\btheta_t}^2}
\leq
\Ec{(\norm{\btheta_t-\btheta^*}+\norm{\btheta^*})^2}$
by the triangle inequality. That is, to bound
$\Ec{\norm{\btheta_t}^2}$, we can first bound the surrogate target
$\Ec{\norm{\btheta_t-\btheta^*}^2}$ using
$\max_{i\leq t-1}\Ec{\norm{\btheta_i}^2}$.

Motivated by this, we consider the induction hypothesis:
\[
\max_{i\leq t-1} \ \E\left[\norm{\btheta_{i}}^2\right]\leq \rho_c^2\max\left\{\frac{r^2_{\infty}}{\phi^2_{\infty}},\norm{\btheta^*}^2\right\}
\]
and prove that $\max_{i\leq t}\E[\norm{\btheta_i}^2] \leq \rho_c^2\max\left\{\frac{r^2_{\infty}}{\phi^2_{\infty}},\norm{\btheta^*}^2\right\}$. Indeed, if the stepsize parameter $c$ is large enough, a constant that bounds the previous iterates will also bound the next iterate.
We remark again that the stepsize parameter $c$ and $\rho_c$ in induction are chosen carefully to ensure the inductive step proceeds.
The precise derivation can be found in Theorem~\ref{thm:bounded_iterates_formal} in the Appendix.

\vspace{-0.25cm}
\paragraph{Convergence result.}
\label{proof:con}
We now give the proof of Corollary~\ref{cor:rate}.
\vspace{-0.25cm}
\begin{proof}
For any $0\leq t\leq T-1$, let $d_t=\norm{\btheta^*-\btheta_t}$. Thus,
\begin{align*}
    d_{t+1}^2 &= \norm{\btheta^*-\btheta_{t}-\eta_{t}\bg_{t}}^2
    = d^2_{t}-2\eta_{t}\Ip{\bg_{t}}{\btheta^*-\btheta_{t}}+\eta^2_{t}\norm{\bg_{t}}^2\\
    &= d^2_{t}-2\eta_{t}\Ip{\bar{\bg}(\btheta_{t})}{\btheta^*-\btheta_{t}}+2\eta_{t}\Ip{\bar{\bg}(\btheta_{t})-\bg_{t}}{\btheta^*-\btheta_{t}}+\eta^2_{t}\norm{\bg_{t}}^2~.
\end{align*}
Summing from $t=0$ to $t=T-1$, taking the expectation, and using Lemma~\ref{lem:gradient_splitting}, we have
\begin{align*}
    &\sum_{t=0}^{T-1} 2 \eta_t\Ec{(1-\gamma)\norm{\bV_{\btheta_t}-\bV_{\btheta^*}}^2_{\bD}+\gamma \norm{\bV_{\btheta_t}-\bV_{\btheta^*}}^2_{\mathrm{Dir}}}\\
    &\quad \leq \sum_{t=0}^{T-1} \left(\Ec{d^2_{t}}-\Ec{d^2_{t+1}}\right) + \Ec{\sum_{t=0}^{T-1}\eta^2_t\norm{\bg_t}^2} +\Ec{\sum_{t=0}^{T-1}2\eta_t\Ip{\bar{\bg}(\btheta_t)-\bg_{t}}{\btheta^*-\btheta_t}} \\
    &\quad = \norm{\btheta^*}^2+ \mathcal{O}\left(\rho_c^2\max\left\{\frac{r^2_\infty}{\phi^2_\infty},\norm{\btheta^*}^2\right\}\right),
\end{align*}
where we used Theorem~\ref{thm:stability} to upper bound $\E\left[\norm{\btheta_{t}}^2\right]$ related terms by $\rho_c^2\max\left\{\frac{r^2_{\infty}}{\phi^2_{\infty}},\norm{\btheta^*}^2\right\}$  in the last inequality. Using the convexity of $f$, we have
\begin{align*}
\mathbb{E}\left[f(\bbtheta_T)-f(\btheta^*)\right]
&\leq \frac{1}{\sum_{i=0}^{T-1}\eta_i}\sum_{t=0}^{T-1}\eta_t\Ec{(1-\gamma)\norm{\bV_{\btheta_t}-\bV_{\btheta^*}}^2_{\bD}} +\frac{1}{\sum_{i=0}^{T-1}\eta_i}\sum_{t=0}^{T-1}\eta_t\Ec{\gamma \norm{\bV_{\btheta_t}-\bV_{\btheta^*}}^2_{\mathrm{Dir}}}\\
&= \widetilde{\mathcal{O}}\left(\tfrac{c\rho_c^2\max\left\{r^2_\infty,\phi^2_\infty \norm{\btheta^*}^2\right\}}{\sqrt{T}}\right),
\end{align*}
where we use $\sum_{i=0}^{T-1}\eta_i \geq \frac{2\sqrt{T+1}-2}{c\phi^2_{\infty}\log^2(T+3)}$ in the last equality.
\end{proof}

\section{Conclusion}
In this paper, we present a robust finite-time analysis of TD(0) without requiring additional projection steps. To the best of our knowledge, this is the first finite-time guarantee in this setting. In particular, we do not employ the contraction-based proof technique used in previous work; instead, we directly prove that the iterates of TD(0) are bounded. We believe our proof is general and, for example, it can be easily extended to analyze the TD($\lambda$) algorithm and the Q-learning setting introduced in~\citep{chen2022finite}.

In future work, we plan to investigate the possibility of obtaining rates that interpolate between $\widetilde{\mathcal{O}}(1/\sqrt{T})$ and $\widetilde{\mathcal{O}}(1/T)$, depending on the curvature of the potential function. Ideally, one would like to show that TD(0) adapts to the curvature of the function with a specific stepsize, as is possible with recent parameter-free schemes~\citep{CutkoskyO18}.

\bibliography{bib_db}

\begin{thebibliography}{31}
\providecommand{\natexlab}[1]{#1}
\providecommand{\url}[1]{\texttt{#1}}
\expandafter\ifx\csname urlstyle\endcsname\relax
  \providecommand{\doi}[1]{doi: #1}\else
  \providecommand{\doi}{doi: \begingroup \urlstyle{rm}\Url}\fi

\bibitem[Bhandari et~al.(2018)Bhandari, Russo, and Singal]{bhandari2018finite}
Jalaj Bhandari, Daniel Russo, and Raghav Singal.
\newblock A finite time analysis of temporal difference learning with linear function approximation.
\newblock In \emph{Conference on learning theory}, pages 1691--1692. PMLR, 2018.

\bibitem[Chen et~al.(2015)Chen, Seff, Kornhauser, and Xiao]{chen2015deepdriving}
Chenyi Chen, Ari Seff, Alain Kornhauser, and Jianxiong Xiao.
\newblock Deepdriving: Learning affordance for direct perception in autonomous driving.
\newblock In \emph{Proceedings of the IEEE International Conference on Computer Vision}, pages 2722--2730, 2015.

\bibitem[Chen et~al.(2022)Chen, Zhang, Doan, Clarke, and Maguluri]{chen2022finite}
Zaiwei Chen, Sheng Zhang, Thinh~T Doan, John-Paul Clarke, and Siva~Theja Maguluri.
\newblock Finite-sample analysis of nonlinear stochastic approximation with applications in reinforcement learning.
\newblock \emph{Automatica}, 146:\penalty0 110623, 2022.

\bibitem[Cutkosky and Orabona(2018)]{CutkoskyO18}
Ashok Cutkosky and Francesco Orabona.
\newblock Black-box reductions for parameter-free online learning in {Banach} spaces.
\newblock In \emph{Proc. of the Conference on Learning Theory (COLT)}, 2018.

\bibitem[Dalal et~al.(2018)Dalal, Thoppe, Sz{\"o}r{\'e}nyi, and Mannor]{dalal2018finite}
Gal Dalal, Gugan Thoppe, Bal{\'a}zs Sz{\"o}r{\'e}nyi, and Shie Mannor.
\newblock Finite sample analysis of two-timescale stochastic approximation with applications to reinforcement learning.
\newblock In \emph{Conference On Learning Theory}, pages 1199--1233. PMLR, 2018.

\bibitem[Diaconis and Saloff-Coste(1996)]{diaconis1996logarithmic}
Persi Diaconis and Laurent Saloff-Coste.
\newblock Logarithmic {Sobolev} inequalities for finite {Markov} chains.
\newblock \emph{The Annals of Applied Probability}, 6\penalty0 (3):\penalty0 695--750, 1996.

\bibitem[Gu et~al.(2017)Gu, Holly, Lillicrap, and Levine]{gu2017deep}
Shixiang Gu, Ethan Holly, Timothy Lillicrap, and Sergey Levine.
\newblock Deep reinforcement learning for robotic manipulation with asynchronous off-policy updates.
\newblock In \emph{2017 IEEE International Conference on Robotics and Automation (ICRA)}, pages 3389--3396. IEEE, 2017.

\bibitem[Ivgi et~al.(2023)Ivgi, Hinder, and Carmon]{ivgi2023dog}
Maor Ivgi, Oliver Hinder, and Yair Carmon.
\newblock {DoG} is {SGD}'s best friend: A parameter-free dynamic step size schedule.
\newblock In \emph{International Conference on Machine Learning}, pages 14465--14499. PMLR, 2023.

\bibitem[Kingma(2014)]{kingma2014adam}
Diederik~P Kingma.
\newblock Adam: A method for stochastic optimization.
\newblock \emph{arXiv preprint arXiv:1412.6980}, 2014.

\bibitem[Korda and La(2015)]{korda2015td}
Nathaniel Korda and Prashanth La.
\newblock On {TD}(0) with function approximation: Concentration bounds and a centered variant with exponential convergence.
\newblock In \emph{International Conference on Machine Learning}, pages 626--634. PMLR, 2015.

\bibitem[Kushner(2010)]{kushner2010stochastic}
Harold Kushner.
\newblock Stochastic approximation: a survey.
\newblock \emph{Wiley Interdisciplinary Reviews: Computational Statistics}, 2\penalty0 (1):\penalty0 87--96, 2010.

\bibitem[Lakshminarayanan and Szepesv{\'a}ri(2018)]{lakshminarayanan2018linear}
Chandrashekar Lakshminarayanan and Csaba Szepesv{\'a}ri.
\newblock Linear stochastic approximation: How far does constant step-size and iterate averaging go?
\newblock In \emph{International Conference on Artificial Intelligence and Statistics}, pages 1347--1355. PMLR, 2018.

\bibitem[Levin and Peres(2017)]{levin2017markov}
David~A Levin and Yuval Peres.
\newblock \emph{Markov chains and mixing times}, volume 107.
\newblock American Mathematical Society, 2017.

\bibitem[Li et~al.(2025)Li, Schmidt, Babanezhad, and Vaswani]{li2025towards}
Yunxiang Li, Mark Schmidt, Reza Babanezhad, and Sharan Vaswani.
\newblock Towards parameter-free temporal difference learning.
\newblock In \emph{NeurIPS 2025 Workshop: Second Workshop on Aligning Reinforcement Learning Experimentalists and Theorists}, 2025.
\newblock URL \url{https://openreview.net/forum?id=BKcraYdfVd}.

\bibitem[Liu and Olshevsky(2021)]{liu2021temporal}
Rui Liu and Alex Olshevsky.
\newblock Temporal difference learning as gradient splitting.
\newblock In \emph{International Conference on Machine Learning}, pages 6905--6913. PMLR, 2021.

\bibitem[Mannor et~al.(2022)Mannor, Mansour, and Tamar]{MannorMT-RLbook}
Shie Mannor, Yishay Mansour, and Aviv Tamar.
\newblock \emph{Reinforcement Learning: Foundations}.
\newblock -, 2022.
\newblock URL \url{https://sites.google.com/view/rlfoundations/home}.

\bibitem[Mitra(2024)]{mitra2024simple}
Aritra Mitra.
\newblock A simple finite-time analysis of {TD} learning with linear function approximation.
\newblock \emph{IEEE Transactions on Automatic Control}, 2024.

\bibitem[Nemirovski et~al.(2009)Nemirovski, Juditsky, Lan, and Shapiro]{nemirovski2009robust}
Arkadi Nemirovski, Anatoli Juditsky, Guanghui Lan, and Alexander Shapiro.
\newblock Robust stochastic approximation approach to stochastic programming.
\newblock \emph{SIAM Journal on Optimization}, 19\penalty0 (4):\penalty0 1574--1609, 2009.

\bibitem[Ollivier(2018)]{ollivier2018approximate}
Yann Ollivier.
\newblock Approximate temporal difference learning is a gradient descent for reversible policies.
\newblock \emph{arXiv preprint arXiv:1805.00869}, 2018.

\bibitem[Orabona and P\'al(2021)]{OrabonaP21}
Francesco Orabona and David P\'al.
\newblock Parameter-free stochastic optimization of variationally coherent functions.
\newblock \emph{arXiv preprint arXiv:2102.00236}, 2021.

\bibitem[Patil et~al.(2023)Patil, Prashanth, Nagaraj, and Precup]{patil2023finite}
Gandharv Patil, L.~A. Prashanth, Dheeraj Nagaraj, and Doina Precup.
\newblock Finite time analysis of temporal difference learning with linear function approximation: Tail averaging and regularisation.
\newblock In \emph{International Conference on Artificial Intelligence and Statistics}, pages 5438--5448. PMLR, 2023.

\bibitem[Samsonov et~al.(2024)Samsonov, Tiapkin, Naumov, and Moulines]{samsonov2024improved}
Sergey Samsonov, Daniil Tiapkin, Alexey Naumov, and Eric Moulines.
\newblock Improved high-probability bounds for the temporal difference learning algorithm via exponential stability.
\newblock In \emph{The Thirty Seventh Annual Conference on Learning Theory}, pages 4511--4547. PMLR, 2024.

\bibitem[Silver et~al.(2016)Silver, Huang, Maddison, Guez, Sifre, Van Den~Driessche, Schrittwieser, Antonoglou, Panneershelvam, Lanctot, Dieleman, Grewe, Nham, Kalchbrenner, Sutskever, Lillicrap, Leach, Kavukcuoglu, Graepel, and Hassabis]{silver2016mastering}
David Silver, Aja Huang, Chris~J Maddison, Arthur Guez, Laurent Sifre, George Van Den~Driessche, Julian Schrittwieser, Ioannis Antonoglou, Veda Panneershelvam, Marc Lanctot, Sander Dieleman, Dominik Grewe, John Nham, Nal Kalchbrenner, Ilya Sutskever, Timothy Lillicrap, Madeleine Leach, Koray Kavukcuoglu, Thore Graepel, and Demis Hassabis.
\newblock Mastering the game of go with deep neural networks and tree search.
\newblock \emph{Nature}, 529\penalty0 (7587):\penalty0 484--489, 2016.

\bibitem[Srikant and Ying(2019)]{srikant2019finite}
Rayadurgam Srikant and Lei Ying.
\newblock Finite-time error bounds for linear stochastic approximation and td learning.
\newblock In \emph{Conference on Learning Theory}, pages 2803--2830. PMLR, 2019.

\bibitem[Sun et~al.(2021)Sun, Shen, Chen, and Li]{sun2021adaptive}
Tao Sun, Han Shen, Tianyi Chen, and Dongsheng Li.
\newblock Adaptive temporal difference learning with linear function approximation.
\newblock \emph{IEEE Transactions on Pattern Analysis and Machine Intelligence}, 44\penalty0 (12):\penalty0 8812--8824, 2021.

\bibitem[Sun et~al.(2022)Sun, Li, and Wang]{sun2022finite}
Tao Sun, Dongsheng Li, and Bao Wang.
\newblock Finite-time analysis of adaptive temporal difference learning with deep neural networks.
\newblock \emph{Advances in Neural Information Processing Systems}, 35:\penalty0 19592--19604, 2022.

\bibitem[Sutton(1988)]{sutton1988learning}
Richard~S Sutton.
\newblock Learning to predict by the methods of temporal differences.
\newblock \emph{Machine Learning}, 3:\penalty0 9--44, 1988.

\bibitem[Sutton and Barto(1998)]{sutton1998reinforcement}
Richard~S Sutton and Andrew~G Barto.
\newblock \emph{Reinforcement Learning: An Introduction}, volume~1.
\newblock MIT Press, 1998.

\bibitem[Telgarsky(2022)]{telgarsky2022stochastic}
Matus Telgarsky.
\newblock Stochastic linear optimization never overfits with quadratically-bounded losses on general data.
\newblock In \emph{Conference on Learning Theory}, pages 5453--5488. PMLR, 2022.

\bibitem[Tsitsiklis and Van~Roy(1996)]{tsitsiklis1996analysis}
John Tsitsiklis and Benjamin Van~Roy.
\newblock Analysis of temporal-diffference learning with function approximation.
\newblock \emph{Advances in Neural Information Processing Systems}, 9, 1996.

\bibitem[Xiao(2010)]{Xiao10}
Lin Xiao.
\newblock Dual averaging methods for regularized stochastic learning and online optimization.
\newblock \emph{Journal of Machine Learning Research}, 11:\penalty0 2543--2596, 2010.

\end{thebibliography}
\bibliographystyle{plainnat}
\newpage
\appendix

\section{Evidence on Complementarity of Fast and Robust Rates}
\label{sec:fast_robust}

The fact that fast and robust rates are complementary and both worth studying is well known among experts in stochastic approximation. However, because people rarely state obvious facts, younger generations may sometimes miss them. Hence, to help a reader who might have missed such subtlety, here we report quotes from seminal papers on the topic of fast vs. robust rates.

\begin{itemize}
\item \citet{bhandari2018finite}, Section 8.1, on the complementarity of the robust and fast rate:
``As before, in the spirit of robust stochastic approximation [Nemirovski et al., 2009], the bound in part (a) gives a comparatively slow convergence rate of $\widetilde{\mathcal{O}}(1/\sqrt{T})$, but where the bound and step-size sequence are independent of the conditioning of the feature covariance matrix $\bSigma$. The bound in part (c) gives a faster convergence rate in terms of the number of samples $T$, but the bound and as well as the step-size sequence depend on the minimum eigenvalue $\omega$ of $\bSigma$.''
\item \citet{samsonov2024improved}, Section 3, on the comparison with robust approaches:
``\textbf{Comparison to the robust SA approach.} Note that the leading term of the bound in Theorem 3 includes factors of $1/\lambda_\text{min}$. This dependence is generally unavoidable if one aims to obtain the MSE bound for $\E[\|\bar{\btheta}_n-\btheta^*\|^2_{\bSigma_{\bphi}}]$ that scales as $1/n$. [...] In contrast, within the basin of robust stochastic approximation (RSA, (Nemirovski et al., 2009)), a convergence rate for $\E[\|\bar{\btheta}_n-\btheta^*\|^2_{\bSigma_{\bphi}}]$ of order $O(1/\sqrt{n})$ can be derived with the instance-independent choice of step size. Importantly, this rate is not affected by a worst-case factor of $\lambda^{-1}_\text{min}$. This result was obtained for the TD algorithm in (Bhandari et al., 2018, Theorem 2).''
\item \citet{liu2021temporal}, Section 4.1, on the advantage of rates that are independent of the curvature:
``One issue is the choice of step-size. The existing literature on temporal difference learning contains a range of possible step-sizes from $O(1/t)$ to $O(1/\sqrt{t})$ (see Bhandari et al. (2018); Dalal et al. (2018); Lakshminarayanan and Szepesvari (2018)). A step-size that scales as $O(1/\sqrt{t})$ is often preferred because, for faster decaying step-sizes, performance will scale with the smallest eigenvalue of $\bPhi^\top \bD \bPhi$ or related quantity, and these can be quite small. This is not the case, however, for a step-size that decays like $O(1/\sqrt{t})$.''
\end{itemize}

\begin{proof}
For any $0\leq t\leq T-1$, let $d_t=\norm{\btheta^*-\btheta_t}$. Thus,
\begin{align*}
    d_{t+1}^2 &= \norm{\btheta^*-\btheta_{t}-\eta_{t}\bg_{t}}^2
    = d^2_{t}-2\eta_{t}\Ip{\bg_{t}}{\btheta^*-\btheta_{t}}+\eta^2_{t}\norm{\bg_{t}}^2\\
    &= d^2_{t}-2\eta_{t}\Ip{\bar{\bg}(\btheta_{t})}{\btheta^*-\btheta_{t}}+2\eta_{t}\Ip{\bar{\bg}(\btheta_{t})-\bg_{t}}{\btheta^*-\btheta_{t}}+\eta^2_{t}\norm{\bg_{t}}^2~.
\end{align*}
Summing from $t=0$ to $t=T-1$, taking the expectation, and using Lemma~\ref{lem:gradient_splitting}, we have
\begin{align*}
    &\sum_{t=0}^{T-1} 2 \eta_t\Ec{(1-\gamma)\norm{\bV_{\btheta_t}-\bV_{\btheta^*}}^2_{\bD}+\gamma \norm{\bV_{\btheta_t}-\bV_{\btheta^*}}^2_{\mathrm{Dir}}}\\
    &\quad \leq \sum_{t=0}^{T-1} \left(\Ec{d^2_{t}}-\Ec{d^2_{t+1}}\right) + \Ec{\sum_{t=0}^{T-1}\eta^2_t\norm{\bg_t}^2} +\Ec{\sum_{t=0}^{T-1}2\eta_t\Ip{\bar{\bg}(\btheta_t)-\bg_{t}}{\btheta^*-\btheta_t}} \\
    &\quad = \norm{\btheta^*}^2+ \mathcal{O}\left(\rho_c^2\max\left\{\frac{r^2_\infty}{\phi^2_\infty},\norm{\btheta^*}^2\right\}\right),
\end{align*}
where we used Theorem~\ref{thm:bounded_iterates_formal} in the last inequality. Using the convexity of $f$, we have
\begin{align*}
\mathbb{E}\left[f(\bbtheta_T)-f(\btheta^*)\right]
&\leq \frac{1}{\sum_{i=0}^{T-1}\eta_i}\sum_{t=0}^{T-1}\eta_t\Ec{(1-\gamma)\norm{\bV_{\btheta_t}-\bV_{\btheta^*}}^2_{\bD}} +\frac{1}{\sum_{i=0}^{T-1}\eta_i}\sum_{t=0}^{T-1}\eta_t\Ec{\gamma \norm{\bV_{\btheta_t}-\bV_{\btheta^*}}^2_{\mathrm{Dir}}}\\
&= \widetilde{\mathcal{O}}\left(\tfrac{c\rho_c^2\max\left\{r^2_\infty,\phi^2_\infty \norm{\btheta^*}^2\right\}}{\sqrt{T}}\right),
\end{align*}
where we use $\sum_{i=0}^{T-1}\eta_i \geq \frac{2\sqrt{T+1}-2}{c\phi^2_{\infty}\log^2(T+3)}$ in the last equality.
\end{proof}

\section{On the Inputs of TD Learning Algorithms}
\label{sec:hyper}
In this section, we discuss the hyperparameter requirements of various TD learning algorithms to achieve rates in Table~\ref{table:comparison}. 

We begin with the fast regime.

In \citet[Theorem 3(c)]{bhandari2018finite}, with $\phi_{\infty}^2=1$. The stepsize is chosen as
\[\eta_t=\frac{1}{\omega(t+1)(1-\gamma)}.\]
So prior knowledge of $\phi_{\infty}$ and $\omega$ is required to determine the step sizes. However, their rate depends on $\tau(\eta_T)$. Thus, in order to achieve the true $\widetilde{\mathcal{O}}(1/T)$ rate (eliminating the dependency on $\alpha$), the knowledge of $\omega$, $\phi_{\infty}$, $\alpha$ and $T$ is required.

In \citet[Theorem 7]{srikant2019finite}, the constant stepsize $\eta$ is chosen to satisfy
\[\frac{32}{\omega}(1+r_{\infty}\phi_{\infty}+2\phi^2_\infty\norm{\btheta^*})\eta\tau(\eta)+\frac{\eta}{2\omega}\leq 0.05\]
So the prior knowledge of $\phi_{\infty}$, $\alpha$, $\omega$ and $\norm{\btheta^*}$ is required for implementing the algorithm.

In \citet[Section 7]{patil2023finite}, the stepsize is set to the constant value
\[\eta=\frac{1-\gamma}{(1+\gamma)^2\phi^2_\infty}~,\]
and the data-dropping block length can be chosen as $K = \frac{1}{\log (1/\alpha)} \log(C T^3)$ with a fixed $T$ and $\delta=\frac{1}{T^2}$. Consequently, implementing the algorithm requires prior knowledge of $\phi_{\infty}$ to set the stepsize, and $\alpha$ to set the block length.

In \citet[Theorem 6]{samsonov2024improved}, the stepsize can be set to the constant value
\[\eta=\frac{1-\gamma}{384\log T}~,\]
under the assumption that $\phi_{\infty}^2=1$ and choosing $\delta=\frac{1}{T^2}$. The data-dropping block length can also be chosen as $q = \lceil \frac{3\log T}{\log(1/\alpha)\log 4}\rceil$. Consequently, implementing the algorithm requires prior knowledge of $\phi_{\infty}$ to set the stepsize, and $\alpha$ to set the block length.

In \citet[Theorem 1]{mitra2024simple}, the constant stepsize $\eta$ is chosen to satisfy
\[\eta \leq \frac{\omega(1-\gamma)}{C_1\tau(\eta)}\,,\]
for some universal constant $C_1\geq 8$ under the assumption $\phi_{\infty}^2=1$. So the prior knowledge of $\phi_{\infty}$, $\alpha$, and $\omega$ is required for implementing the algorithm.

In \citet[Theorem 4.10]{li2025towards}, the exponential stepsize $\eta_t$ is defined as
\[\eta_t\coloneqq \eta_0 \left(\frac{1}{T}\right)^{t/T}\]
under the assumption $\phi_{\infty}^2=1$ and $\eta_0$ depends on $\omega$. Notably, $T$ needs to be set large enough to compensate for $\alpha$. 

Let's now move to the robust regime.

In \citet[Theorem 3(a)]{bhandari2018finite} and \citet[Corollary 2]{liu2021temporal}, the stepsize is chosen as 
\[\eta_t=1/\sqrt{T}\]
under the assumption $\phi_{\infty}^2=1$. So running the algorithm requires only the knowledge of $\phi_{\infty}$.
However, as proved in Appendix~\ref{sec:mixing_discussion}, to make their rates independent of $\alpha$ and of the order $\widetilde{\mathcal{O}}(1/\sqrt{T})$. $T$ needs to be set large enough depending on $\alpha$.

In \citet[Theorem 3(a)]{sun2021adaptive} the stepsize is chosen as 
\[\eta>0\]
under the assumption $\phi_{\infty}^2=1$. Thus, running the algorithm requires only the knowledge of $\phi_{\infty}$. However, to obtain the $\widetilde{\mathcal{O}}(1/\sqrt{T})$ rate, $\eta$ would need to be set with prior knowledge of $\alpha$, $\omega$, $\phi_{\infty}$ and $T$.

In our work, the stepsize is chosen as
\[\eta_t=\frac{1}{c\phi^2_{\infty}\log(T)\sqrt{t+1}\log(t+3)}.\]
We remark that $T$ is large enough compared to $\alpha$.

\section{Summary of Notation}

We will assume $|r(s,s^\prime)|\leq r_{\infty}$ and $\norm{\bphi(s)}\leq \phi_\infty$ for all $s,s^\prime\in \mathcal{S}$. For all $t\leq T$, we recall the following notation:
\begin{align*}
    \F_t&\coloneqq \sigma(s_0,\ldots,s_{t+1}), \F_{-1}\coloneqq \sigma(s_0),\\
    Z_t&\coloneqq(s_t,s_{t+1}),\\
    d_t &\coloneqq \norm{\btheta_t-\btheta^*}\in \F_{t-1}, d_0\coloneqq \norm{\btheta^*},\\
    \ell_{t} &\coloneqq r_{\infty}\phi_{\infty}+2\phi_{\infty}^2\norm{\btheta_t}\in \F_{t-1},\\
    \bg_t
    &\coloneqq\bg(\btheta_t,Z_t)\in\F_t,\\
    \bxi_t&\coloneqq\bg(\btheta_t,Z_t)-\Ec[\F_{t-1}]{\bg(\btheta_t,Z_t)}\in \F_{t},\\
    \bb_t &\coloneqq \Ec[\F_{t-1}]{\bg(\btheta_t,Z_t)}-\bar{\bg}(\btheta_t)\in \F_{t-1}~.
\end{align*}

\section{Proof of Lemma~\ref{lem:go_back} and Lemma~\ref{lem:lipschitz_TD}}
\label{sec:proof_lem_total_variation}

The proof of Lemma~\ref{lem:go_back} is adapted from \citet[Lemma~9]{bhandari2018finite}, and we state it here for completeness.
\begin{proof}
Recall the Markov chain $(s_t)_{t \ge 0}$ induced by the policy $\mu$, with
transition probability $P^{\mu}$, stationary distribution $\pi$, and geometric mixing
\[
\sup_{s \in \mathcal{S}}
\bigl\| (P^{\mu})^t(\cdot\mid s) - \pi \bigr\|_{\mathrm{TV}}
\;\le\; C \alpha^t,
\qquad t \ge 0,
\]
as stated in Theorem~\ref{thm:markov_chain_mixing_0}. Let $\mathcal{F}_t := \sigma(s_0,\dots,s_{t+1})$ and $Z_t := (s_t,s_{t+1})$. Define
\[
\bar{\bg}(\btheta) := \mathbb{E}[ \bg(\btheta, Z) ],
\]
where $Z = (S,S')$ with $S \sim \pi$ and $S' \sim P^{\mu}(S,\cdot)$, and the pair $(S,S')$ is independent
of algorithm's trajectory. By construction, $Z$ has the stationary law of the pair
$(s_t,s_{t+1})$.

Fix integers $k$, $k'$ with $0 \le k' \le k$ and set $t := k - k'$.
Note that, by the algorithmic recursion, $\btheta_t$ is
$\mathcal{F}_{t-1}$-measurable, and $s_t$ is also $\mathcal{F}_{t-1}$-measurable
since $Z_{t-1} = (s_{t-1},s_t) \in \mathcal{F}_{t-1}$.

Let $(\Omega,\mathcal{G})$ be a measurable space, $P,Q$ probability measures
on $(\Omega,\mathcal{G})$, and $f : \Omega \to \mathbb{R}^d$ a measurable
function with
\[
\|f\|_\infty := \sup_{\omega \in \Omega} \|f(\omega)\| < \infty.
\]
We claim that
\begin{equation}
\label{eq:vector-TV}
\bigl\| \mathbb{E}_P[f] - \mathbb{E}_Q[f] \bigr\|
\;\le\;
2 \|f\|_\infty \, d_{\mathrm{TV}}(P,Q),
\end{equation}
where $d_{\mathrm{TV}}$ denotes total-variation distance.

Indeed, for any unit vector $u \in \mathbb{R}^d$ with $\|u\| = 1$, define
the scalar function
\[
h_u(\omega) := \frac{\langle u, f(\omega)\rangle}{2\|f\|_\infty}.
\]
Then $|h_u(\omega)| \le 1/2$ for all $\omega$. By the variational
representation of total variation,
\[
d_{\mathrm{TV}}(P,Q)
= \sup_{\|h\|_\infty \le 1/2}
\bigl| \mathbb{E}_P[h] - \mathbb{E}_Q[h] \bigr|.
\]
Thus, for every such $u$,
\[
\Bigl|
\mathbb{E}_P[h_u] - \mathbb{E}_Q[h_u]
\Bigr|
\;\le\;
d_{\mathrm{TV}}(P,Q),
\]
and hence
\[
\Bigl|
\langle u, \mathbb{E}_P[f] - \mathbb{E}_Q[f] \rangle
\Bigr|
= 2\|f\|_\infty
\Bigl|
\mathbb{E}_P[h_u] - \mathbb{E}_Q[h_u]
\Bigr|
\;\le\;
2\|f\|_\infty d_{\mathrm{TV}}(P,Q).
\]
Taking the supremum over all unit vectors $u$ yields
\[
\bigl\| \mathbb{E}_P[f] - \mathbb{E}_Q[f] \bigr\|
= \sup_{\|u\|=1}
\Bigl|
\langle u, \mathbb{E}_P[f] - \mathbb{E}_Q[f] \rangle
\Bigr|
\le 2\|f\|_\infty d_{\mathrm{TV}}(P,Q),
\]
which proves \eqref{eq:vector-TV}.

Now fix a state $s \in \mathcal{S}$, and consider the Markov chain started from
$s_t = s$. By time-homogeneity, the distribution of $s_k$ given $s_t = s$ is
$(P^{\mu})^{k'}(\cdot\mid s)$, and $s_{k+1}$ is then sampled from $P^{\mu}(\cdot\mid s_k)$.

Let $\nu_s$ be the law of the pair $(s_k,s_{k+1})$ given $s_t = s$, and let
$\nu$ be the stationary law of $(S,S')$ defined above. We claim that
\begin{equation}
\label{eq:pair-TV}
d_{\mathrm{TV}}(\nu_s,\nu)
\;\le\;
\bigl\|(P^{\mu})^{k'}(\cdot\mid s) - \pi\bigr\|_{\mathrm{TV}}
\;\le\;
C \alpha^{k'}.
\end{equation}

To see the first inequality, note that both $\nu_s$ and $\nu$ use the same
conditional kernel $P^{\mu}$ for the second coordinate. For any measurable
$A \subseteq \mathcal{S} \times \mathcal{S}$, write
\[
A_x := \{y \in \mathcal{S} : (x,y) \in A\}.
\]
Then
\[
\nu_s(A)
=
\sum_{x \in \mathcal{S}} (P^{\mu})^{k'}(s,x) P^{\mu}(x,A_x),
\qquad
\nu(A)
=
\sum_{x \in \mathcal{S}} \pi(x) P^{\mu}(x,A_x),
\]
so
\[
\nu_s(A) - \nu(A)
=
\sum_{x \in \mathcal{S}}
\bigl( (P^{\mu})^{k'}(s,x) - \pi(x) \bigr) P^{\mu}(x,A_x).
\]
Since $0 \le P^{\mu}(x,A_x) \le 1$, the supremum of $|\nu_s(A)-\nu(A)|$ over
all measurable $A$ equals the total-variation distance between the first
marginals, that is,
\[
d_{\mathrm{TV}}(\nu_s,\nu)
=
\bigl\|(P^{\mu})^{k'}(\cdot\mid s) - \pi\bigr\|_{\mathrm{TV}}.
\]
By the mixing bound from Theorem~\ref{thm:markov_chain_mixing_0}, the right-hand side is at most
$C \alpha^{k'}$, which proves \eqref{eq:pair-TV}.

Now we have all the tools to work on the original probability space and condition on the
$\sigma$-field $\mathcal{F}_{t-1}$. For each outcome $\omega$, the
realizations $\btheta_t(\omega)$ and $s_t(\omega)$ are fixed. By the Markov
property and time-homogeneity, the conditional law of $Z_k = (s_k,s_{k+1})$
given $\mathcal{F}_{t-1}$ depends on $\omega$ only through $s_t(\omega)$, and
coincides with $\nu_{s_t(\omega)}$:
\[
\mathcal{L}\bigl(Z_k \,\big|\, \mathcal{F}_{t-1}\bigr)(\omega)
=
\nu_{s_t(\omega)}.
\]
Let $\nu$ again denote the stationary law of $Z = (S,S')$, which does not
depend on $\omega$. By \eqref{eq:pair-TV}, for every $\omega$,
\[
d_{\mathrm{TV}}\Bigl(
\mathcal{L}\bigl(Z_k \,\big|\, \mathcal{F}_{t-1}\bigr)(\omega),\,
\nu
\Bigr)
\;\le\;
C \alpha^{k'}.
\]

Fix $\omega$ and set $\btheta := \btheta_t(\omega)$, and define a function
$f_{\btheta} : \mathcal{S} \times \mathcal{S} \to \mathbb{R}^d$ by
$
f_{\btheta}(o) := \bg(\btheta,o).
$
Then,
$
\|f_{\btheta}\|_\infty = \sup_{o} \|\bg(\btheta,o)\|.
$
Using \eqref{eq:vector-TV} with
$P = \mathcal{L}(Z_k \mid \mathcal{F}_{t-1})(\omega)$
and $Q = \nu$, we obtain
\[
\Bigl\|
\mathbb{E}_P\bigl[ \bg(\btheta_t,o)\bigr]
-
\mathbb{E}_\nu\bigl[ \bg(\btheta_t,o)\bigr]
\Bigr\|
\le
2 \sup_{o} \|\bg(\btheta_t,o)\| \, C \alpha^{k'}.
\]
Note that $Z$ with law $\nu$ is independent of $\mathcal{F}_{t-1}$, so
\[
\mathbb{E}_{\nu}\bigl[ \bg(\btheta_t,o)\bigr]
=
\mathbb{E}\bigl[ \bg(\btheta_t,Z) \,\big|\, \mathcal{F}_{t-1} \bigr](\omega)
=
\bar{\bg}(\btheta_t(\omega)).
\]
And by construction,
\[\mathbb{E}_P\bigl[ \bg(\btheta_t,o)\bigr]
=\mathbb{E}\bigl[ \bg(\btheta_t,Z_k) \,\big|\, \mathcal{F}_{t-1} \bigr](\omega).\]

Thus, for almost every $\omega$,
\[
\Bigl\|
\mathbb{E}\bigl[ \bg(\btheta_t,Z_k) - \bar{\bg}(\btheta_t) \,\big|\, \mathcal{F}_{t-1} \bigr](\omega)
\Bigr\|
\le
2 \sup_{o} \|\bg(\btheta_t,o)\| \, C \alpha^{k'}.
\]
Recall that $t = k-k'$, so $\mathcal{F}_{t-1} = \mathcal{F}_{k-k'-1}$ and
$\btheta_t = \btheta_{k-k'}$. Hence
\[
\Bigl\|\Ec[\mathcal{F}_{k-k'-1} ]{ \bg(\btheta_{k-k'}, Z_k) - \bar{\bg}(\btheta_{k-k'})}
\Bigr\|
\le
2 \sup_{o} \|\bg(\btheta_{k-k'},o)\| \, C \alpha^{k'}\text{ a.s.}\,
\]
which completes the proof.
\end{proof}
The proof of Lemma~\ref{lem:lipschitz_TD} is straightforward and we give it here for completeness.
\begin{proof}
We have
\begin{align*}
&|\bXi(\btheta_k,Z_k)-\bXi(\btheta_{k^\prime},Z_k)|\\
&\quad = \Big|
\left\langle
\Ec[\mathcal{F}_{k-1}]{\bg(\btheta_k,Z_k)} - \bar{\bg}(\btheta_k),
\ \btheta_k - \btheta^*
\right\rangle
 -
\left\langle
\Ec[\mathcal{F}_{k-1}]{\bg(\btheta_{k'},Z_k)} - \bar{\bg}(\btheta_{k'}) ,
\ \btheta_{k'} - \btheta^*
\right\rangle
\Big|\\
&\quad \le
\big\|
\Ec[\mathcal{F}_{k-1}]{\bg(\btheta_k,Z_k)} - \bar{\bg}(\btheta_k)
\big\|
\,
\|\btheta_k - \btheta_{k'}\|
\\
&\quad +
\|\btheta_{k'} - \btheta^*\|
\,
\big\|
\big(\Ec[\mathcal{F}_{k-1}]{\bg(\btheta_k,Z_k)} - \bar{\bg}(\btheta_k)\big)
-
\big(\Ec[\mathcal{F}_{k-1}]{\bg(\btheta_{k'},Z_k)} - \bar{\bg}(\btheta_{k'})\big)
\big\|\\
  &\quad \leq 2\ell_k\norm{\btheta_k-\btheta_{k^\prime}} + d_{k^\prime}(2\phi_{\infty}^2\norm{\btheta_k-\btheta_{k^\prime}}+2\phi_{\infty}^2\norm{\btheta_k-\btheta_{k^\prime}})\\
    &\quad \leq (2\ell_k+4\phi_{\infty}^2d_{k^\prime})\norm{\btheta_k-\btheta_{k^\prime}},
\end{align*}
where we used Lemma~\ref{lem:2lipschitz} with Jensen inequality in the second-to-last inequality.
\end{proof}
\section{Technical Lemmas}
\label{appendix:technical_lemma}
\begin{lemma}
\label{lem:log^2(t)t}
Suppose that $0\leq u<t$. Then, we have
\[
\sum^{t-1}_{k=u+1}\frac{1}{\log(k+3)\log(k-u+3)\sqrt{k+1}\sqrt{k-u+1}}
\leq \frac{2}{\log 3}~.
\]
\end{lemma}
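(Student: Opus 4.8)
The plan is to reduce the double-indexed summand to a one-dimensional series in the shift $j:=k-u$ and then compare that series to an elementary integral whose antiderivative is $-1/\log(\cdot)$.

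The first step is the only one that requires any thought. Since $u\ge 0$ and $u+1\le k\le t-1$, we have $k+3\ge k-u+3\ge 4$ and $k+1\ge k-u+1\ge 2$, hence $\log(k+3)\ge\log(k-u+3)>0$ and $\sqrt{k+1}\ge\sqrt{k-u+1}>0$. All factors being positive, I would deduce the termwise bound
\[
\frac{1}{\log(k+3)\log(k-u+3)\sqrt{k+1}\sqrt{k-u+1}}
\;\le\; \frac{1}{(k-u+1)\,\log^2(k-u+3)}~.
\]
It is essential here to discard the $k$-dependence and keep the $(k-u)$-dependence: bounding $\log(k-u+3)$ or $\sqrt{k-u+1}$ downward instead would leave something like $\sum_k \tfrac{1}{\log(k+3)\sqrt{k+1}}$, which diverges. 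After the substitution $j=k-u$, the sum is then dominated by $\sum_{j=1}^{\infty}\frac{1}{(j+1)\log^2(j+3)}$, a quantity no longer depending on $t$ or $u$.

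It remains to check $\sum_{j\ge 1}\frac{1}{(j+1)\log^2(j+3)}\le \frac{2}{\log 3}$. For $j\ge1$ one has $j+3\le 2(j+1)$, so $\frac{1}{j+1}\le\frac{2}{j+3}$ and each term is at most $\frac{2}{(j+3)\log^2(j+3)}$. Since $x\mapsto \frac{2}{(x+3)\log^2(x+3)}$ is positive and decreasing on $[0,\infty)$, the integral test yields
\[
\sum_{j=1}^{\infty}\frac{2}{(j+3)\log^2(j+3)}\;\le\;\int_{0}^{\infty}\frac{2\,\diff x}{(x+3)\log^2(x+3)}
=\Bigl[-\tfrac{2}{\log(x+3)}\Bigr]_{0}^{\infty}=\frac{2}{\log 3}~,
\]
which is exactly the claimed constant. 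Apart from the choice made in the first step, the whole argument is just the standard integral comparison for a $\sum 1/(n\log^2 n)$-type series, so I do not anticipate a genuine obstacle; the only thing to be careful about is not loosening the $\log$ or $\sqrt{\cdot}$ factors too early, which would turn a convergent series into a divergent one.
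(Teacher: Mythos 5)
Your proposal is correct and follows essentially the same route as the paper's proof: the same termwise reduction to $\frac{1}{(j+1)\log^2(j+3)}$ via $j=k-u$, the same bound $j+1\ge\frac{j+3}{2}$, and the same integral comparison with antiderivative $-2/\log(x+3)$. No gaps.
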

\begin{proof}
By re-indexing, we have 
\[
\sum_{k=u+1}^{t-1}
\frac{1}{\log(k+3)\,\log(k-u+3)\,\sqrt{k+1}\,\sqrt{k-u+1}}
\;\le\;
\sum_{j=1}^{t-u-1}
\frac{1}{(j+1)\,\log^{2}(j+3)}~.
\]
Consider the function
\[
h(x)\coloneqq \frac{1}{(x+1)\,\log^{2}(x+3)},\qquad x\geq 1~.
\]
For $x\ge1$ we have $x+1\ge\frac{x+3}{2}$, hence
\[
h(x)\;=\;\frac{1}{(x+1)\,\log^{2}(x+3)}
\;\le\;
\frac{2}{(x+3)\,\log^{2}(x+3)}~.
\]
Substituting $y=\log(x+3)$ and using the fact that $h$ is decreasing, we have
\[
\sum_{j=1}^{t-u-1}
\frac{1}{(j+1)\,\log^{2}(j+3)}\leq 
\int_{0}^{t-u-1}\frac{2 \dd x}{(x+3)\,\log^{2}(x+3)}
=\frac{2}{\log 3}-\frac{2}{\log(t-u+2)}~. \qedhere
\]
\end{proof}
\begin{lemma}
\label{lem:log(t)t}
Suppose that $0\leq u<t$. Then, we have
\[
\sum^{t-1}_{k=u+1}\frac{1}{\log(k+3)\sqrt{k+1}\sqrt{t}}
\leq \frac{2}{\log (u+4)}~.
\]
\end{lemma}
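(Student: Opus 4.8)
The plan is to exploit that the factor $1/\sqrt{t}$ does not depend on the summation index $k$, pull it out of the sum, and reduce the claim to a standard integral comparison. First, I would observe that for every $k$ in the range $u+1 \le k \le t-1$ we have $k+3 \ge u+4$, hence $\log(k+3) \ge \log(u+4)$ and therefore $\frac{1}{\log(k+3)} \le \frac{1}{\log(u+4)}$. Factoring this bound out of the sum reduces the statement to showing
\[
\frac{1}{\sqrt{t}}\sum_{k=u+1}^{t-1}\frac{1}{\sqrt{k+1}} \;\le\; 2~.
\]

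Next, since $x \mapsto \frac{1}{\sqrt{x+1}}$ is decreasing on $[0,\infty)$, I would bound each term by a left Riemann contribution, $\frac{1}{\sqrt{k+1}} \le \int_{k-1}^{k}\frac{\dd x}{\sqrt{x+1}}$, and sum over $k$ from $u+1$ to $t-1$ to get
\[
\sum_{k=u+1}^{t-1}\frac{1}{\sqrt{k+1}} \;\le\; \int_{u}^{t-1}\frac{\dd x}{\sqrt{x+1}} \;=\; 2\sqrt{t}-2\sqrt{u+1} \;\le\; 2\sqrt{t}~.
\]
Dividing by $\sqrt{t}$ gives the reduced inequality, and combining with the first step yields the claim. (The edge case $t=u+1$ is trivial, since then the sum is empty and the right-hand side is nonnegative.)

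\textbf{Main obstacle.} There is essentially no obstacle here; this is a routine monotone-function/integral-comparison estimate, strictly simpler than Lemma~\ref{lem:log^2(t)t}. The only point needing a little care is orienting the integral comparison correctly---using that the integrand is decreasing so the left Riemann sum over-counts---and making sure the integration range $[u,t-1]$ is aligned with the summation range $k\in\{u+1,\dots,t-1\}$.
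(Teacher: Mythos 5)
Your proof is correct and follows essentially the same route as the paper's: pull the $1/\sqrt{t}$ factor out, bound $\log(k+3)\ge\log(u+4)$ uniformly over the summation range, and control $\sum_{k=u+1}^{t-1}(k+1)^{-1/2}$ by an integral comparison yielding roughly $2\sqrt{t}$. The only cosmetic difference is the exact integral used ($\int_u^{t-1}(x+1)^{-1/2}\,\dd x$ versus the paper's $\int_u^t x^{-1/2}\,\dd x$), which changes nothing of substance.
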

\begin{proof}

We first factor out \(\frac{1}{\sqrt{t}}\):
\[
S_{u,t}
=\frac{1}{\sqrt{t}}\sum_{k=u+1}^{t-1}\frac{1}{\log(k+3)\sqrt{k+1}}~.
\]
Because \(k\ge u+1\),
\[
\log(k+3)\;\ge\;\log(u+4)
\quad\Longrightarrow\quad
\frac{1}{\log(k+3)}
\;\le\;
\frac{1}{\log(u+4)}~.
\]
Hence
\[
S_{u,t}
\;\le\;
\frac{1}{\sqrt{t}\,\log(u+4)}
\sum_{k=u+1}^{t-1}\frac{1}{\sqrt{k+1}}~.
\]
The map \(x\mapsto x^{-1/2}\) is positive and monotonically decreasing, so
\[
\sum_{k=u+1}^{t-1}\frac{1}{\sqrt{k+1}}
\;\le\;
\int_{u}^{t}\!x^{-\tfrac12}\,dx
\;=\;
2\bigl(\sqrt{t}-\sqrt{u}\bigr)~.
\]
Thus
\[
S_{u,t}
\;\le\;
\frac{2(\sqrt{t}-\sqrt{u})}{\sqrt{t}\,\log(u+4)}
\;=\;
\frac{2}{\log(u+4)}
\Bigl(1-\sqrt{\tfrac{u}{t}}\Bigr)
\;\le\;
\frac{2}{\log(u+4)}~. \qedhere
\]
\end{proof}
\begin{lemma}
\label{lem:log^2(t)t_1}
For every integer $t\ge 1$,
\[
\sum_{k=0}^{t-1}\frac{1}{\log^2 (k+3)\,(k+1)}
\;\le\;
\frac{1}{\log^2 3}
+\frac{2}{\log 3}
-\frac{2}{\log (t+2)}
\;\le\;
\frac{1}{\log^2 3}
+\frac{2}{\log 3}\;<3 ~.
\]\end{lemma}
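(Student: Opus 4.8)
The plan is to isolate the $k=0$ term and bound the remaining tail by an integral, mirroring the argument already used in Lemma~\ref{lem:log^2(t)t}. The $k=0$ term contributes exactly $\frac{1}{\log^2 3\cdot 1}=\frac{1}{(\log 3)^2}$, which is precisely the first summand on the right-hand side, so it remains only to show $\sum_{k=1}^{t-1}\frac{1}{\log^2(k+3)(k+1)}\le \frac{2}{\log 3}-\frac{2}{\log(t+2)}$ (and to treat $t=1$, where this tail sum is empty and both sides reduce to $\frac{1}{(\log 3)^2}$).

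For the tail, I would first apply the elementary inequality $k+1\ge\frac{k+3}{2}$, which holds for all $k\ge 1$ — this is exactly why the $k=0$ term must be split off, since the inequality fails at $k=0$ — to get $\frac{1}{\log^2(k+3)(k+1)}\le\frac{2}{\log^2(k+3)(k+3)}$. The function $h(x):=\frac{2}{\log^2(x+3)(x+3)}$ is positive and strictly decreasing on $[0,\infty)$, because $x\mapsto (x+3)\log^2(x+3)$ is a product of positive increasing factors for $x\ge 0$; hence $h(k)\le\int_{k-1}^{k}h(x)\,\dd x$ for each $k\ge 1$, and summing gives $\sum_{k=1}^{t-1}h(k)\le\int_{0}^{t-1}h(x)\,\dd x$.

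The remaining integral is evaluated by the substitution $y=\log(x+3)$, $\dd y=\dd x/(x+3)$, turning it into $\int_{\log 3}^{\log(t+2)}\frac{2\,\dd y}{y^2}=\frac{2}{\log 3}-\frac{2}{\log(t+2)}$. Adding back the $k=0$ term yields the first inequality; dropping the nonpositive term $-\frac{2}{\log(t+2)}$ yields the second; and the final numerical bound follows from $\log 3>1.09$, which gives $\frac{1}{(\log 3)^2}+\frac{2}{\log 3}<0.83+1.83=2.66<3$. There is no genuine obstacle here: the proof is a routine integral comparison, and the only points requiring a bit of care are the case split at $k=0$, the verification that $h$ is monotone on the entire range $[0,\infty)$, and checking the degenerate case $t=1$ where equality holds throughout the claimed chain.
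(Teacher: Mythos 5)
Your proposal is correct and follows essentially the same route as the paper's proof: split off the $k=0$ term, use $k+1\ge\frac{k+3}{2}$ for $k\ge 1$, and bound the tail by the integral $2\int_{3}^{t+2}\frac{\diff x}{x\log^2 x}=\frac{2}{\log 3}-\frac{2}{\log(t+2)}$ via the substitution $y=\log(x+3)$. The only difference is cosmetic (you parametrize the integral over $[0,t-1]$ before substituting, and you spell out the $t=1$ edge case and the monotonicity check more explicitly), and the final numerical estimate is sound.
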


\begin{proof}
Denote the sum by $S_t$ and separate the $k=0$ term:
\[
S_t \;=\;\frac{1}{\log^2 3}+
\sum_{k=1}^{t-1}\frac{1}{\log^2 (k+3)(k+1)}~.
\]
For every $k\ge 1$ we have $k+1\ge\frac12(k+3)$, hence
\[
\frac{1}{\log^2(k+3)(k+1)}
\;\le\;
\frac{2}{\log^2(k+3)(k+3)}~.
\]
Because $x\mapsto \dfrac{1}{x\log^2 x}$ is positive and decreasing for $x\ge 3$,
\[
\sum_{k=1}^{t-1}\frac{2}{\log^2(k+3)(k+3)}
\;\le\;
2\!\!\int_{3}^{t+2}\!\!\frac{dx}{x\log^2 x}~.
\]
The antiderivative is $-\dfrac{1}{\log x}$, so
\[
2\!\!\int_{3}^{t+2}\!\!\frac{dx}{x\log^2 x}
\;=\;
2\left[-\frac{1}{\log x}\right]_{3}^{\,t+2}
\;=\;
\frac{2}{\log 3}-\frac{2}{\log(t+2)}~.
\]
Thus,
\[
S_t\;\le\;
\frac{1}{\log^2 3}
+\frac{2}{\log 3}
-\frac{2}{\log(t+2)}~. \qedhere
\]
\end{proof}
\begin{lemma}
\label{lem:large_T}
Let $\alpha\in[1/2,1)$ and define $L\coloneqq \log(1/\alpha)>0$.
For $t\ge 1$, set
$
u_t=\left\lceil \frac{\log(2\sqrt{t})}{L} \right\rceil$. Assume that $T\in\mathbb{N}^+$ satisfies
$
\log T\ \ge\ \frac{2}{L^3}
$.
Then for every integer $t$ with $1\le t\le T$,
\[
\log^2 T \ \ge\ (u_t+1)^{3/2}.
\]
\end{lemma}

\begin{proof}
Write $X\coloneqq \log T$. Since $\lceil y\rceil\le y+1$ for all real $y$, for any $1\le t\le T$ we have
\[
u_t+1
=\left\lceil \frac{\log(2\sqrt t)}{L}\right\rceil+1
\le \frac{\log(2\sqrt t)}{L}+2
\le \frac{\log(2\sqrt T)}{L}+2.
\]
Using $\log(2\sqrt T)=\tfrac12\log T+\log 2=\tfrac12 X+\log 2$, this becomes
\[
u_t+1 \le \frac{\tfrac12 X+\log 2}{L}+2.
\]
From the assumption $X\ge 2/L^3$ we get $X^{1/3}\ge 2^{1/3}/L$, hence
\[
X^{4/3}=X\cdot X^{1/3}\ \ge\ 2^{1/3}\frac{X}{L}.
\]
Therefore, it suffices to show
\[
\frac{\tfrac12 X+\log 2}{L}+2 \ \le\ 2^{1/3}\frac{X}{L},
\]
or equivalently (multiplying by $L>0$)
\[
\Bigl(2^{1/3}-\tfrac12\Bigr)X\ \ge\ \log 2+2L.
\]
Because $\alpha\in[1/2,1)$, we have $L=\log(1/\alpha)\le \log 2$, hence $\log 2+2L\le 3\log 2$.
Also $X\ge 2/L^3\ge 2/(\log 2)^3$, so it is enough to verify
\[
\Bigl(2^{1/3}-\tfrac12\Bigr)\frac{2}{(\log 2)^3}\ \ge\ 3\log 2,
\]
which is equivalent to
\[
2\Bigl(2^{1/3}-\tfrac12\Bigr)\ \ge\ 3(\log 2)^4.
\]
Now $\log 2<3/4$ since $e^{3/4}=1+\frac34+\frac{(3/4)^2}{2}+\cdots>1+\frac34+\frac{9}{32}=2.03125>2$.
Thus
\[
3(\log 2)^4 < 3\Bigl(\frac34\Bigr)^4=\frac{243}{256}<1.
\]
On the other hand,
\[
2\Bigl(2^{1/3}-\tfrac12\Bigr)=2^{4/3}-1>2-1=1,
\]
so indeed $2(2^{1/3}-1/2) > 3(\log 2)^4$. This proves $u_t+1\le X^{4/3}$ for all $1\le t\le T$.

Finally, raising both sides to the power $3/2$ yields
\[
(u_t+1)^{3/2}\le \bigl(X^{4/3}\bigr)^{3/2}=X^2=(\log T)^2,
\]
which is the desired inequality.
\end{proof}

\begin{lemma}{\citep[Lemma 10]{bhandari2018finite}}
\label{lem:2lipschitz}
Fix any $k\leq T$, for any $Z_k$, $\btheta$ and $\btheta^\prime$, we have
\begin{align*}
    \norm{\bg(\btheta,Z_k)-\bg(\btheta^\prime,Z_k)}&\leq 2\phi^2_{\infty}\norm{\btheta-\btheta^\prime},\text{ and}\\
    \norm{\bar{\bg}(\btheta)-\bar{\bg}(\btheta^\prime)}&\leq 2\phi_{\infty}^2\norm{\btheta-\btheta^\prime}\,.
\end{align*}
\end{lemma}

\begin{lemma}
\label{lem:cross_terms}
For any $0\leq k^\prime\leq k\leq T-1$, we have 
\begin{align*}
    \Ec{\ell_{k^\prime}\ell_k}
    &\leq r_\infty^2 \phi_\infty^2 + 2r_\infty \phi_\infty^3\Ec{\norm{\btheta_k}}+2r_\infty \phi_\infty^3\Ec{\norm{\btheta_{k^\prime}}}+ 2\phi^4_\infty\Ec{\norm{\btheta_{k}}^2}+2\phi^4_\infty\Ec{\norm{\btheta_{k^\prime}}^2},\\
    \Ec{\phi^2_{\infty}d_{k^\prime}\ell_k}
    &\leq \frac{\phi^4_{\infty}d_0^2+r_\infty^2 \phi_\infty^2}{2} + 2r_\infty \phi_\infty^3 \Ec{\norm{\btheta_k}}+2\phi^4_\infty\Ec{\norm{\btheta_k}^2}+ \phi^4_{\infty}d_0 \Ec{\norm{\btheta_{k^\prime}}}\\
    &\quad+\frac{\phi^4_{\infty}}{2}\Ec{\norm{\btheta_{k^\prime}}^2} ~.
\end{align*} 
\end{lemma}
\begin{proof}
From the AM-GM inequality, it suffices to upper bound $\Ec{\ell_k^2}$ and $\Ec{d_{k^\prime}^2}$. We have
\begin{align*}
   \Ec{\ell^2_k}&=\Ec{(r_\infty \phi_\infty+2\phi^2_{\infty}\norm{\btheta_{k}} )^2}
    \leq r_\infty^2 \phi_\infty^2 + 4r_\infty \phi_\infty^3 \Ec{\norm{\btheta_k}}+4\phi^4_\infty\Ec{\norm{\btheta_k}^2},\\
    \Ec{d_{k^\prime}^2}
    &\leq \Ec{(\norm{\btheta_{k^\prime}}+d_0)^2}=d_0^2 + 2 d_0 \Ec{\norm{\btheta_{k^\prime}}}+\Ec{\norm{\btheta_{k^\prime}}^2}~.
\end{align*}  
Thus, we have 
\begin{align*}
    2\Ec{\ell_{k^\prime}\ell_k}&\leq \Ec{\ell^2_k}+\Ec{\ell_{k^\prime}^2}\\
    &\leq 2r_\infty^2 \phi_\infty^2 + 4r_\infty \phi_\infty^3\Ec{\norm{\btheta_k}}+4r_\infty \phi_\infty^3\Ec{\norm{\btheta_{k^\prime}}}+ 4\phi^4_\infty\Ec{\norm{\btheta_k}^2}+4\phi^4_\infty\Ec{\norm{\btheta_{k^\prime}}^2},\\
    2\Ec{\phi^2_{\infty}d_{k^\prime}\ell_k}&\leq \Ec{\ell^2_k}+\Ec{\phi^4_{\infty}d_{k^\prime}^2}\\
    &\leq r_\infty^2 \phi_\infty^2 + 4r_\infty \phi_\infty^3 \Ec{\norm{\btheta_k}}+4\phi^4_\infty\Ec{\norm{\btheta_k}^2}+ \phi^4_{\infty}d_0^2 + 2 \phi^4_{\infty}d_0 \Ec{\norm{\btheta_{k^\prime}}}\\
    &\quad+\phi^4_{\infty}\Ec{\norm{\btheta_{k^\prime}}^2}~.\qedhere
\end{align*}
\end{proof}

\section{Bias and Gradient Norm}
\label{appendix:bias_variance}

Recall that we decompose the updates as follows:
\begin{align*}
    \btheta_{t}
    &= \btheta_{t-1} + \eta_{t-1}\Big(
        \underbrace{\bg_{t-1}-\E[\bg_{t-1}\mid \F_{t-2}]}_{\displaystyle \bxi_{t-1}}
        + \underbrace{\E[\bg_{t-1}\mid \F_{t-2}] - \bar{\bg}(\btheta_{t-1})}_{\displaystyle \bb_{t-1}}
        + ~\bar{\bg}(\btheta_{t-1})
    \Big)~.
\end{align*}
Then, we aim to bound terms on the right-hand side of the following inequality:
 \begin{align*}
    \Ec{\norm{\btheta_{t}-\btheta^*}^2}
    &\leq  2\Ec{\sum_{k=0}^{t-1}\eta_k\Ip{\bb_k}{\btheta_k-\btheta^*}}+3\Ec{\sum_{k=0}^{t-1}\eta^2_k\norm{\bxi_k}^2+\sum_{k=0}^{t-1}\eta^2_k\norm{\bb_k}^2+\sum_{k=0}^{t-1}\eta^2_k\norm{\bar{\bg}(\btheta_k)}^2}\\
    &\quad + \norm{\btheta^*}^2~.
\end{align*}
We will bound each term separately in the following lemmas.

\begin{lemma}
\label{lem:bias}
For any $1\leq t\leq T$, define $u_t\coloneqq \left\lceil \frac{\log(2\sqrt{t})}{\log(1/\alpha)} \right\rceil$, we have
\begin{align*}
    \Ec{\sum_{k=0}^{t-1}\eta_k\Ip{\bb_k}{\btheta_k-\btheta^*}}
    &\leq { 8}\frac{d_0\ell_0\sqrt{u_t+1}}{c\phi^2_\infty \log T}+ {4}\Ec{\sum_{k=u_t+1}^{t-1}\frac{d_{k-u_t}\ell_{k-u_t}}{c\phi^2_\infty\log T\log(k+3)\sqrt{k+1}\sqrt{t}}}\\
    &\quad + \frac{2}{c^2\phi_\infty^4\log^2 T}\Ec{\sum_{k=0}^{u_t} \frac{\ell_k+2\phi^2_\infty d_{0}}{\sqrt{k+1}\log(k+3)}\sum_{i=1}^{k}\frac{\ell_{i-1}}{\log(i+2)\sqrt{i}}} \\
    &\quad + \frac{2}{c^2\phi_\infty^4 \log^2 T}\Ec{\sum_{k=u_t+1}^{t-1} \frac{\ell_k+2\phi^2_\infty d_{k-u_t}}{\log(k+3)\log(k-u_t+3)\sqrt{k+1}}\sum_{i=k-u_t+1}^{k}\frac{\ell_{i-1}}{\sqrt{i}}}~.
\end{align*}
In particular, if $\log T\geq \frac{2}{\log^3(1/\alpha)}$, then by Lemma~\ref{lem:large_T}, $\sqrt{u_t+1}/\log T \leq 1$. Simplifying terms using Lemma~\ref{lem:cross_terms}, we have $\Ec{\sum_{k=0}^{t-1}\eta_k\Ip{\bb_k}{\btheta_k-\btheta^*}}=\mathcal{O}(\max_{i\leq t-1}\ \Ec{\norm{\btheta_{i}}^2}+\norm{\btheta^*}^2)$.
\end{lemma}
\begin{proof}
    Let $h_k \coloneqq \min\{k,u_t\}$.
    We decompose the bias term as suggested in the proof sketch:
    \begin{align*}
        \Ec{\sum_{k=0}^{t-1}\eta_k\Ip{\bb_k}{\btheta_k-\btheta^*}} 
        &= \Ec{\sum_{k=0}^{t-1}\eta_{k}\Ip{\Ec[\F_{k-1}]{\bg(\btheta_{k-h_k},Z_k)}-\bar{\bg}(\btheta_{k-h_k})}{\btheta_{k-h_k}-\btheta^*}}\\
        &\quad + \Ec{\sum_{k=0}^{t-1}\eta_k\Ip{\Ec[\F_{k-1}]{\bg(\btheta_k,Z_k)}-\bar{\bg}(\btheta_k)}{\btheta_k-\btheta^*}}\\
        &\quad -\Ec{\sum_{k=0}^{t-1}\eta_{k}\Ip{\Ec[\F_{k-1}]{\bg(\btheta_{k-h_k},Z_k)}-\bar{\bg}(\btheta_{k-h_k})}{\btheta_{k-h_k}-\btheta^*}}~.
    \end{align*}
    For the $\Ec{\sum_{k=0}^{t-1}\eta_{k}\Ip{\Ec[\F_{k-1}]{\bg(\btheta_{k-h_k},Z_k)}-\bar{\bg}(\btheta_{k-h_k})}{\btheta_{k-h_k}-\btheta^*}}$ term above, we have
    \begin{align*}
        &\Ec{\sum_{k=0}^{t-1}\eta_{k}\Ip{\Ec[\F_{k-1}]{\bg(\btheta_{k-h_k},Z_k)}-\bar{\bg}(\btheta_{k-h_k})}{\btheta_{k-h_k}-\btheta^*}}\\
        &=\Ec[\F_{k-1-h_k}]{\sum_{k=0}^{t-1}\eta_{k}\Ip{\Ec[\F_{k-1}]{\bg(\btheta_{k-h_k},Z_k)}-\bar{\bg}(\btheta_{k-h_k})}{\btheta_{k-h_k}-\btheta^*}}\\
        &= \sum_{k=0}^{t-1}\Ec{\eta_k \Ip{\Ec[\F_{k-1-h_k}]{\bg(\btheta_{k-h_k},Z_k)-\bar{\bg}(\btheta_{k-h_k})}}{\btheta_{k-h_k}-\btheta^*}}\\
        &\leq \sum_{k=0}^{t-1}\Ec{\frac{1}{c\phi^2_\infty\log T\log(k+3)\sqrt{k+1}}\norm{\btheta_{k-h_k}-\btheta^*}\norm{\Ec[\mathcal{F}_{k-1-h_k}]{\bg(\btheta_{k-h_k},Z_k)-\bar{\bg}(\btheta_{k-h_k})}}}\\
        &\leq \sum_{k=0}^{t-1}\Ec{\frac{2d_{k-h_k}\sup_{o}\norm{\bg(\btheta_{k-h_k},o)}C\alpha^{h_k}}{c\phi^2_\infty \log T\log(k+3)\sqrt{k+1}}}
        \leq \sum_{k=0}^{t-1}\Ec{\frac{{4}d_{k-h_k}\ell_{k-h_k} \alpha^{h_k}}{c\phi^2_\infty\log T\log(k+3)\sqrt{k+1}}}\\
        &\leq {8}\frac{d_0\ell_0 \sqrt{u_t+1}}{c\phi^2_\infty \log T}+ {4}\Ec{\sum_{k=u_t+1}^{t-1}\frac{d_{k-u_t}\ell_{k-u_t}}{c\phi^2_\infty\log T\log(k+3)\sqrt{k+1}\sqrt{t}}},
    \end{align*}
   where we use Cauchy-Schwarz inequality in the first inequality and Lemma~\ref{lem:go_back} in the second inequality. And for $k\geq u_t$, we have $\alpha^{h_k}\leq \frac{1}{\sqrt{t}}$, we use it with $C\leq2$ in the last inequality.
    For the remaining term, we have
    \begin{align*}
        &\Ec{\sum_{k=0}^{t-1}\eta_k\Ip{\Ec[\F_{k-1}]{\bg(\btheta_k,{ Z_k})}-\bar{\bg}(\btheta_k)}{\btheta_k-\btheta^*}}\\
        &\quad
        -\Ec{\sum_{k=0}^{t-1}\eta_{k}\Ip{\Ec[\F_{k-1}]{\bg(\btheta_{k-h_k},{ Z_k})}-\bar{\bg}(\btheta_{k-h_k})}{\btheta_{k-h_k}-\btheta^*}}\\
        &\quad=\Ec{\sum_{k=0}^{t-1}\eta_k \left(\bXi(\btheta_k,{ Z_k})-\bXi(\btheta_{k-h_k},{ Z_k})\right)} \\
        &\quad\leq \sum_{k=0}^{t-1}\Ec{\eta_k (2\ell_k+4\phi^2_{\infty}d_{k-h_k})\norm{\btheta_{k}-\btheta_{k-h_k}}}\\
        &\quad\leq  \sum_{k=0}^{t-1}\Ec{\eta_k (2\ell_k+4\phi^2_{\infty}d_{k-h_k})\sum_{i=k-h_k}^{k-1}\eta_i \norm{\bg_i}} \\
        &\quad= \Ec{\sum_{k=0}^{u_t} \eta_k(2\ell_k+4\phi_\infty^2 d_{k-h_k})\sum_{i=k-h_k}^{k-1}\eta_i \norm{\bg_i}}+\Ec{\sum_{k=u_t+1}^{t-1} \eta_k(2\ell_k+4\phi_\infty^2 d_{k-h_k})\sum_{i=k-h_k}^{k-1}\eta_i \norm{\bg_i}}\\
        &\quad= \Ec{\sum_{k=0}^{u_t} \frac{1}{c\phi^2_\infty\log T\log(k+3)\sqrt{k+1}}(2\ell_k+4\phi_\infty^2d_0)\sum_{i=0}^{k-1}\frac{1}{c\phi^2_\infty\log T\log(i+3)\sqrt{i+1}} \norm{\bg_i}}\\
        & \quad\quad +\Ec{\sum_{k=u_t+1}^{t-1} \frac{1}{c\phi^2_\infty\log T\log(k+3)\sqrt{k+1}}(2\ell_k+4\phi_\infty^2d_{k-u_t})\sum_{i=k-u_t}^{k-1}\frac{1}{c\phi^2_\infty\log T\log(i+3)\sqrt{i+1}} \norm{\bg_i}}\\
        &\quad\leq \frac{2}{c^2\phi^4_\infty\log^2 T}\Ec{\sum_{k=0}^{u_t} \frac{\ell_k+2\phi^2_\infty d_{0}}{\sqrt{k+1}\log(k+3)}\sum_{i=1}^{k}\frac{\ell_{i-1}}{\log(i+2)\sqrt{i}}} \\
        &\quad\quad + \frac{2}{c^2\phi^4_\infty\log^2 T}\Ec{\sum_{k=u_t+1}^{t-1} \frac{\ell_k+2\phi^2_\infty d_{k-u_t}}{\log(k+3)\log(k-u_t+3)\sqrt{k+1}}\sum_{i=k-u_t+1}^{k}\frac{\ell_{i-1}}{\sqrt{i}}},
    \end{align*}
    
where the first inequality comes from the Lemma~\ref{lem:lipschitz_TD}.
Putting it all together, we obtain the desired result.
\end{proof}

\begin{lemma}
\label{lem:variance}
Define $B_{t-1}=\frac{1}{c^2\phi^4_\infty\log^2 T}\sum_{k=0}^{t-1}\frac{r_\infty^2 \phi_\infty^2 + 4r_\infty \phi_\infty^3 \mathbb{E}\left[\norm{\btheta_{k}} \right]+4\phi^4_\infty\Ec{\norm{\btheta_{k}}^2}}{\log^2(k+3)(k+1)}$.
Then, for any $t\leq T$, we have
\[
    \Ec{\sum_{k=0}^{t-1}\eta^2_k\norm{\bxi_k}^2}
    \leq 4B_{t-1},\quad
    \Ec{\sum_{k=0}^{t-1}\eta^2_k\norm{\bb_k}^2}
    \leq 4B_{t-1},\quad
    \Ec{\sum_{k=0}^{t-1}\eta^2_k\norm{\bar{\bg}(\btheta_k)}^2}
    \leq B_{t-1}~.
\]
\end{lemma}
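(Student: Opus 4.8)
The plan is to reduce all three inequalities to one weighted sum of the second moments of $\ell_k := r_\infty\phi_\infty + 2\phi_\infty^2\norm{\btheta_k}$, and then substitute the explicit step size $\eta_k^2 = \frac{1}{\delta^2(\log T)^2(\log(k+3))^2(k+1)}$, with $\delta := c\phi_\infty^2$ as in the notation preceding Lemma~\ref{lem:bias}. First I would show that each of $\bxi_k$, $\bb_k$ and $\bar{\bg}(\btheta_k)$ has norm at most a small multiple of $\ell_k$. For $\bar{\bg}(\btheta_k)$: by \eqref{eq:bar_g} it is a convex combination, over $(s,s')$ with weights $\pi(s)P^\mu(s,s')$, of the vectors $\bigl(r(s,s')+\gamma\bphi(s')^\top\btheta_k-\bphi(s)^\top\btheta_k\bigr)\bphi(s)$, each of norm at most $r_\infty\phi_\infty + 2\phi_\infty^2\norm{\btheta_k} = \ell_k$ by the same estimate as in the proof of Lemma~\ref{lem:gradient_bound}; hence $\norm{\bar{\bg}(\btheta_k)}\le\ell_k$. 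For $\bg_k$: Lemma~\ref{lem:gradient_bound} gives $\norm{\bg(\btheta_k,O_k)}\le\ell_k$ for every realization of $O_k$, and since $\ell_k$ is $\F_{k-1}$-measurable, Jensen's inequality for conditional expectations yields $\norm{\Ec[\F_{k-1}]{\bg_k}}\le\Ec[\F_{k-1}]{\norm{\bg_k}}\le\ell_k$. Combining via the triangle inequality, $\norm{\bxi_k}=\norm{\bg_k - \Ec[\F_{k-1}]{\bg_k}}\le 2\ell_k$ and $\norm{\bb_k}=\norm{\Ec[\F_{k-1}]{\bg_k}-\bar{\bg}(\btheta_k)}\le 2\ell_k$, so $\norm{\bxi_k}^2,\norm{\bb_k}^2\le 4\ell_k^2$ while $\norm{\bar{\bg}(\btheta_k)}^2\le\ell_k^2$.

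Next I would expand $\ell_k^2 = r_\infty^2\phi_\infty^2 + 4r_\infty\phi_\infty^3\norm{\btheta_k} + 4\phi_\infty^4\norm{\btheta_k}^2$, take expectations, multiply by $\eta_k^2$, and sum over $k=0,\dots,t-1$. Using the form of $\eta_k$, this gives for the gradient-norm term
\[
\Ec{\sum_{k=0}^{t-1}\eta_k^2\norm{\bar{\bg}(\btheta_k)}^2}
\;\le\;\frac{1}{\delta^2(\log T)^2}\sum_{k=0}^{t-1}\frac{r_\infty^2\phi_\infty^2 + 4r_\infty\phi_\infty^3\,\Ec{\norm{\btheta_k}} + 4\phi_\infty^4\,\Ec{\norm{\btheta_k}^2}}{(\log(k+3))^2(k+1)},
\]
and the same right-hand side with an extra factor $4$ for the $\bxi_k$ and $\bb_k$ sums. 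Comparing with the definition of $B_{t-1}$, these are precisely the three claimed bounds, save that the per-step moments $\Ec{\norm{\btheta_k}}$, $\Ec{\norm{\btheta_k}^2}$ for $k\le t-1$ stand where the terminal ones $\Ec{\norm{\btheta_{t-1}}}$, $\Ec{\norm{\btheta_{t-1}}^2}$ appear in $B_{t-1}$.

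That substitution is the only non-mechanical point and the step I expect to require the most care: the expected squared iterate norm need not be non-decreasing along a trajectory, so one should either state the lemma with $\max_{0\le j\le t-1}$ of these moments (which is all that the downstream boundedness argument actually needs), or rely on the uniform a priori control of $\Ec{\norm{\btheta_j}^2}$ available inside the inductive proof of Theorem~\ref{thm:bounded_iterates}, under which the replacement is harmless. The rest is pure bookkeeping; in particular the residual sum $\sum_{k=0}^{t-1}\frac{1}{(\log(k+3))^2(k+1)}$ is kept intact inside $B_{t-1}$ and only bounded later, via Lemma~\ref{lem:log^2(t)t_1}, when $B_{t-1}$ is plugged into the iterate recursion.
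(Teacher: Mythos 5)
Your proposal is correct and follows essentially the same route as the paper: bound $\norm{\bar{\bg}(\btheta_k)}\le\ell_k$ and, via Jensen and the triangle inequality, $\norm{\bxi_k},\norm{\bb_k}\le 2\ell_k$, then expand $\ell_k^2$ and substitute the explicit $\eta_k^2$. The index issue you flag is real — the paper's own proof silently passes from $\Ec{\ell_k^2}$ to a bound phrased with $\Ec{\norm{\btheta_{t-1}}}$ and $\Ec{\norm{\btheta_{t-1}}^2}$, which is only legitimate because the downstream induction in Theorem~\ref{thm:bounded_iterates} supplies a uniform bound $\max_{j\le t-1}\Ec{\norm{\btheta_j}^2}\le\omega_c^2\max\{r_\infty^2/\phi_\infty^2,d_0^2\}$, exactly the repair you propose.
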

\begin{proof}
By definition, we have 
\begin{align*}
    \Ec{\sum_{k=0}^{t-1}\eta^2_k\norm{\bxi_k}^2}&\leq \frac{4}{c^2\phi^4_\infty\log^2 T}\Ec{\sum_{k=0}^{t-1}\frac{\ell_{k}^2}{\log^2(k+3)(k+1)}}\\\quad&\leq\frac{4}{c^2\phi^4_\infty\log^2 T}\sum_{k=0}^{t-1}\frac{r_\infty^2 \phi_\infty^2 + 4r_\infty \phi_\infty^3 \Ec{\norm{\btheta_{k}} }+4\phi^4_\infty\Ec{\norm{\btheta_{k}}^2}}{\log^2(k+3)(k+1)}~.
\end{align*}
The proofs for the bounds involving $\bb_k$ and $\bar{\bg}(\btheta_k)$ are similar.
\end{proof}
\section{Bounded Iterates}
\label{appendix:bounded_iterates}
We can now give the formal statement and prove Theorem~\ref{thm:stability}:
\begin{theorem}
\label{thm:bounded_iterates_formal}
Let the horizon $T$ satisfy $\log T\geq \frac{2}{\log^3(1/\alpha)}$. Consider the unprojected TD learning algorithm initialized at $\btheta_0=\bzero$ with the stepsize schedule 
\[
\eta_t=\frac{1}{c\phi^2_{\infty}\log T \log(t+3)\sqrt{t+1}},
\]
where $c>{ 15}+{18\sqrt{2}}$. Then, for any $0\leq t\leq T$,  the iterates satisfy
\[
\Ec{\norm{\btheta_t}^2 }
\leq \rho^2_c \max\left\{\frac{r^2_{\infty}}{\phi^2_{\infty}},\norm{\btheta^*}^2\right\},
\]
where $\rho_c$ is a constant defined as:
\[\rho_c=\frac{2c^2+{ 36}c+434}{2(c^2-{30}c-423)} + \sqrt{\frac{4c^4+{ 256}c^3+{ 128}c^2-{ 29808}c-4532}{4(c^2-{ 30}c-423)^2}}.\] Furthermore, $\rho_c$ is strictly decreasing in $c$, with $\lim_{c \to \infty} \rho_c = 2$ and $\lim_{c \to (15 + 18\sqrt{2})^+} \rho_c = \infty$.
\end{theorem}
\begin{proof}
We recall the following estimates:
\begin{align*}
    \Ec{d^2_{t} - d^2_0}
    &\leq  2\Ec{\sum_{k=0}^{t-1}\eta_k\Ip{\bb_k}{\btheta_k-\btheta^*}}+3\Ec{\sum_{k=0}^{t-1}\eta^2_k\norm{\bxi_k}^2+\sum_{k=0}^{t-1}\eta^2_k\norm{\bb_k}^2+\sum_{k=0}^{t-1}\eta^2_k\norm{\bar{\bg}(\btheta_k)}^2}\\
    &\leq  { 16}\frac{d_0\ell_0\sqrt{u_t+1}}{c\phi^2_\infty \log T }+ {8}\Ec{\sum_{k=u_t+1}^{t-1}\frac{d_{k-u_t}\ell_{k-u_t}}{c\phi^2_\infty\log T\log(k+3)\sqrt{k+1}\sqrt{t}}}\\
    &\quad + \frac{4}{c^2\phi_\infty^4\log^2 T}\Ec{\sum_{k=0}^{u_t} \frac{\ell_k+2\phi^2_\infty d_{0}}{\sqrt{k+1}\log(k+3)}\sum_{i=1}^{k}\frac{\ell_{i-1}}{\log(i+2)\sqrt{i}}} \\
    &\quad + \frac{4}{c^2\phi_\infty^4 \log^2 T}\Ec{\sum_{k=u_t+1}^{t-1} \frac{\ell_k+2\phi^2_\infty d_{k-u_t}}{\log(k+3)\log(k-u_t+3)\sqrt{k+1}}\sum_{i=k-u_t+1}^{k}\frac{\ell_{i-1}}{\sqrt{i}}}\\
        &\quad +\frac{27}{c^2\phi_\infty^4\log^2 T}\sum_{k=0}^{t-1}\frac{r_\infty^2 \phi_\infty^2 + 4r_\infty \phi_\infty^3 \Ec{\norm{\btheta_{k}}}+4\phi^4_\infty\Ec{\norm{\btheta_{k}}^2}}{\log^2(k+3)(k+1)}~.
\end{align*}
We use mathematical induction to show that $\Ec{\norm{\btheta_t}^2 }\leq \rho^2_c \max\left\{\frac{r^2_{\infty}}{\phi^2_{\infty}},d_0^2\right\}$ for all $t\leq T$.
For base case $t=0$:
\begin{align*}
    \Ec{\norm{\btheta_0}^2 }
    = 0
    \leq \rho^2_c \max\left\{\frac{r^2_{\infty}}{\phi^2_{\infty}},d_0^2\right\}~.
\end{align*}
Now, consider the induction hypothesis for $t\leq T$:
\begin{align*}
    \max_{0\leq i\leq t-1} \ \Ec{\norm{\btheta_{i}}^2 }
    \leq \rho^2_c \max\left\{\frac{r^2_{\infty}}{\phi^2_{\infty}},d_0^2\right\}~.
\end{align*}
Our goal is to show that $\Ec{\norm{\btheta_t}^2 }\leq \rho^2_c \max\{\frac{r^2_{\infty}}{\phi^2_{\infty}},d_0^2\}$.

Consider the case that $\frac{r_{\infty}}{\phi_{\infty}}\leq d_0$, and use Lemma~\ref{lem:cross_terms} with the induction hypothesis, to have
\begin{align*}
    \Ec{\ell_{k^\prime}\ell_k}&\leq \beta_1\coloneqq r_\infty^2 \phi_\infty^2 + 4r_\infty \phi_\infty^3\rho_c d_0+ 4\phi^4_\infty\rho_c^2 d_0^2\,,\\
    \Ec{\phi^2_{\infty}d_{k^\prime}\ell_k}
    &\leq \beta_2\coloneqq \frac{\phi^4_{\infty}d_0^2+r_\infty^2 \phi_\infty^2}{2} + 2r_\infty \phi_\infty^3\rho_c d_0+ \phi^4_{\infty}\rho_c d_0^2+2.5\phi^4_\infty\rho_c^2 d_0^2~.
\end{align*} 
Then, it follows that
\begin{align*}
    \Ec{d^2_t}&\leq d_0^2 + \frac{16}{c\phi^2_\infty}\frac{d_0 r_{\infty}\phi_{\infty}}{\log T}\sqrt{u_t+1}
   + \frac{{12}}{c\phi^4_{\infty}\log T}\beta_2\\
    &\quad + \frac{3(\sqrt{u_t+1})^3}{c^2\phi_\infty^4 \log^2 T}(\beta_1+2\beta_2)\\
    &\quad +\frac{8(u_t+1)}{c^2\phi^4_{\infty}\log^2 T}(\beta_1+2\beta_2)\\
    &\quad +\frac{81}{c^2\phi^4_{\infty}\log^2 T} \beta_1,
\end{align*}
where we used Lemma~\ref{lem:log^2(t)t}, Lemma~\ref{lem:log(t)t} and Lemma~\ref{lem:log^2(t)t_1} in the inequality.  
Recall that $u_t=\left\lceil \frac{\log(2\sqrt{t})}{\log(1/\alpha)} \right\rceil$ and $\log T\geq \frac{2}{\log^3(1/\alpha)}$. By Lemma~\ref{lem:large_T}, we have  $\sqrt{u_{t}+1}/\log T \leq 1$, $\frac{u_{t}+1}{\log^2 T}\leq 1$ and $\frac{(u_{t}+1)^{3/2}}{\log^2 T}\leq 1$. Thus, 
\begin{align*}
    \Ec{d^2_t}&\leq d_0^2 + \frac{{16}d_0^2}{c} + \frac{{ 12}}{c}(1+3\rho_c+2.5\rho_c^2)d_0^2\\
    &\quad +\frac{3}{c^2}(3+10\rho_c+9\rho_c^2)d_0^2\\
    &\quad +\frac{8}{c^2}(3+10\rho_c+9\rho_c^2)d_0^2\\
    &\quad +\frac{81}{c^2}(1+4\rho_c+4\rho_c^2)d_0^2\\
    &\leq d_0^2 \left(1+\frac{{28}+{36}\rho_c+{30}\rho_c^2}{c}+\frac{114+434\rho_c+423\rho_c^2}{c^2}\right)\,.
\end{align*}
That is,
\begin{align*}
    \Ec{\norm{\btheta_t}^2}
    &\leq \left(d_0 + \sqrt{\Ec{d^2_t}}\right)^2\\
    &\leq \left(d_0+ \sqrt{1+\frac{{28}+{36}\rho_c+{30}\rho_c^2}{c}+\frac{114+434\rho_c+423\rho_c^2}{c^2}}d_0\right)^2\,.
\end{align*}
To fulfill the induction, we require 
\begin{align*}
    1+\sqrt{1+\frac{{28}+{36}\rho_c+{30}\rho_c^2}{c}+\frac{114+434\rho_c+423\rho_c^2}{c^2}}\leq\,\rho_c~.
\end{align*}
Solving the inequality, we have the following condition:
\[c>\frac{{ 30}+{ 36\sqrt{2}}}{2},\quad \rho_c=\frac{2c^2+{ 36}c+434}{2(c^2-{30}c-423)} + \sqrt{\frac{4c^4+{ 256}c^3+{ 128}c^2-{ 29808}c-4532}{4(c^2-{ 30}c-423)^2}}~.\]
Note that when $c\rightarrow { 15}+{18\sqrt{2}}$, $\rho_c\rightarrow\infty$. And when $c\rightarrow \infty$, $\rho_c\rightarrow 2$. The case $\frac{r_{\infty}}{\phi_{\infty}}>d_0$ can be proved similarly. This completes the induction and the proof.
\end{proof}

\begin{proposition}
Let
\[
c_0:=15+18\sqrt{2},
\]
and for all real $c$ with $c>c_0$ define
\[
\rho_c
=\frac{2c^2+36c+434}{2\bigl(c^2-30c-423\bigr)}
+\sqrt{\frac{4c^4+256c^3+128c^2-29808c-4532}{4\bigl(c^2-30c-423\bigr)^2}}~.
\]
Then $\rho_c$ is strictly decreasing on $(c_0,\infty)$. In particular,
\[
\lim_{c\to c_0^+}\rho_c=+\infty,
\qquad
\lim_{c\to\infty}\rho_c=2~.
\]
\end{proposition}
\begin{proof}
Write $\rho_c = \frac{2c^2+36c+434+\sqrt{D(c)}}{2Q(c)}$ with
$Q(c)=c^2-30c-423$ and $D(c)=(2c^2+36c+434)^2+4Q(c)(28c+114)$.
For $c>c_0$, we have $Q(c)>0$ and $D(c)>0$.

A direct differentiation gives
\[
\rho_c'(c)= -\frac{H(c)+K(c)\sqrt{D(c)}}{2Q(c)^2\sqrt{D(c)}},
\]
where $K(c)=96c^2+2560c+2208>0$ and
$H(c)=248c^4+7352c^3+117720c^2+492200c-6168432$.
On the domain $c>c_0$, we have
$H(c)\ge 492200c-6168432>0$, hence $H(c)+K(c)\sqrt{D(c)}>0$.
Since the denominator is positive, $\rho_c'(c)<0$ for all $c>c_0$.

The limits $\rho_c\to\infty$ as $c\downarrow c_0$ and $\rho_c\to 2$ as $c\to\infty$
follow immediately from $Q(c)\downarrow 0$ and the leading terms of the numerator/denominator.
\end{proof}

\section{Details on the Experiments}
\label{sec:exp}

\textbf{Setting.} We use the Julia\footnote{Code available at \url{https://github.com/SupernovaTitanium/robust-unprojected-td-learning}} programming language to experiment with a toy MDP, which has $n=50$ states, $|\mathcal{A}|=1$ action, discount factor $\gamma=0.99$, and the transition matrix $\bP$ is a directed ring: from state $i\in\{1,2,\cdots,50\}$,
\[
P_{i,i}=0.1,\qquad P_{i,i+1}=0.6,\qquad P_{i,i-1}=0.3, \qquad \text{(with indices modulo $50$)}.
\]
We start with $s_0=1$. The stationary distribution $\pi$ is uniform, hence $\bD = \frac1n \mathbf{I}$. The rewards $r(s,s^\prime)\in[0,1]$ and features $\bPhi\in\mathbb{R}^{n\times d}$ are drawn once using fixed seeds ($d=5$). Optionally, columns of $\bPhi$ are rescaled to control the spectrum. The fixed-point $\btheta^*$ of the projected Bellman equation system is computed by solving
\[
\bA\,\btheta^*=\bc,\qquad 
\bA\coloneqq \bPhi^\top \bD\,(\bI-\gamma \bP)\,\bPhi,\qquad 
\bc\coloneqq\bPhi^\top \bD\,\br,
\]
where $\br\in\R^n$ is the expected reward vector.

\textbf{Algorithm and diagnostics.} We run TD(0) with
\[
\btheta_0=\bzero, \quad \btheta_{t+1}=\btheta_t+\eta_t\big(r(s_t,s_{t+1})+\gamma\,\bphi(s_{t+1})^\top\btheta_t-\bphi(s_t)^\top\btheta_t\big)\bphi(s_t),
\]
and evaluate the weighted average $\bar\btheta_T=\left(\sum_{k=0}^{T-1}\eta_k\right)^{-1}\sum_{k=0}^{T-1}\eta_k\btheta_k$.
To probe the presence of bounded iterates as in Theorem~\ref{thm:bounded_iterates_formal}, we sweep a scalar knob $c>0$ that sets the theory‑inspired stepsizes
\[
\eta_t \coloneqq \frac{1}{c\phi^2_\infty\log T\log(t+3)\sqrt{t+1}},
\]
where horizon $T=10^7$. A smaller $c$ means a larger effective stepsize. For each $c$ we simulate $48$ independent trajectories and aggregate three diagnostics: 
\begin{enumerate}
\item Expected boundedness ratio: $\frac{\max_{i\leq T} \mathbb{E}\left[\|\btheta_i\|^2\right]}{\|\btheta^*\|^2}$, which is large if iterates blow up;
\item Divergence rate: A run is marked as diverged if $\max_{t\leq T}\|\btheta_t\|^2 > 10^{12}$.
\item Suboptimality gap: $\mathbb{E}\left[(1-\gamma)\|\bV_{\bar\btheta_T}-\bV_{\btheta^*}\|_{\bD}^2+\gamma\norm{\bV_{\bar\btheta_T}-\bV_{\btheta^*}}^2_{\mathrm{Dir}}\right]$.
\end{enumerate}
The results are in Figure~\ref{fig:td-threshold-ring}.
\paragraph{Experiments with constant stepsize.}
We also used the constant stepsize
\[
\eta \coloneqq \frac{1}{c\phi^2_\infty\log T\log(T+3)\sqrt{T+1}},
\]
to validate the hypothesis that the behaviors we observe are not due to the time-varying stepsizes.

The results are in Figure~\ref{fig:td-threshold-ring-constant}.

\begin{figure}[ht]
  \centering
  \includegraphics[width=\textwidth]{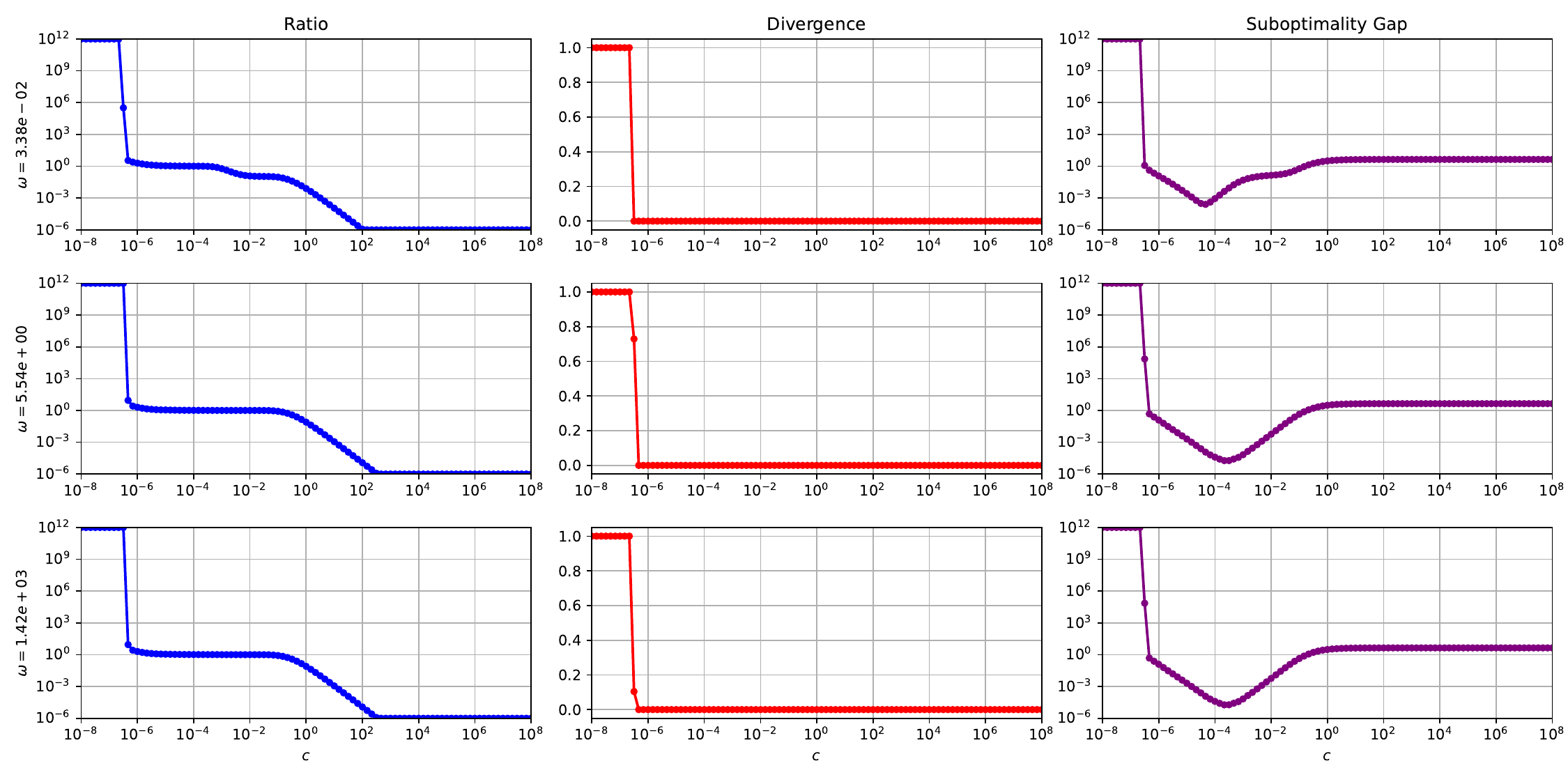}
  \caption{\textbf{Sweep over the stepsize scale $c$ (columns: boundedness ratio, divergence rate, suboptimality gap).} Rows correspond to different feature scalings, which change the spectrum of $\bPhi^\top \bD\bPhi$ (the plot annotates the minimum eigenvalue and condition number for each row).
  }
  \label{fig:td-threshold-ring-constant}
\end{figure}

\section{Dependency on Mixing Time in Previous Algorithms}
\label{sec:mixing_discussion}

In this section, we clarify why \citet{bhandari2018finite,liu2021temporal} exhibit an implicit dependence on the mixing time in their choice of $T$.

We describe it for \citet{bhandari2018finite} only given that a similar reasoning holds for \citet{liu2021temporal}.
In \citet[Theorem 3]{bhandari2018finite}, the convergence guarantee for projected TD(0) with linear function approximation takes the form
\[
\Ec{\norm{\bV_{\bar \btheta_T}-\bV_{\btheta^*}}^2_{\bD}}\leq \frac{\norm{\btheta^*-\btheta_0}^2+G^2(9+12\tau(1/\sqrt{T}))}{2\sqrt{T}(1-\gamma)}~.
\]
Recall that $\tau(1/\sqrt{T})\coloneqq \min\{t \in \mathbb{N} \mid C \alpha^t \leq \frac{1}{\sqrt{T}}\}$. To ensure that the bound is genuinely $\widetilde{\mathcal{O}}(1/\sqrt{T})$ without dependence on $C$ and $\alpha$, one can choose $T$ such that
\[
\log T\geq \frac{1}{\log\left(\frac{1}{\alpha}\right)}\,,\quad\text{and } \sqrt{T}\geq C~.
\]
Thus we have $\tau(1/\sqrt{T})= \left\lceil \frac{\log(C\sqrt{T})}{\log(1/\alpha)} \right\rceil \leq \log^2 T+1$.
This is analogous to our condition
\[
\log T\geq \frac{C}{\log^3(1/\alpha)}
\]
in Theorem~\ref{thm:bounded_iterates_formal}.

\section{Removing the Dependency on $T$ via the Doubling Trick}
\label{sec:doubling_trick}

In this section, we explain how a standard doubling trick removes the requirement to know $T$ in our algorithm. This is a standard method, and we report it only for completeness.

Choose a stepsize based on an educated guess on the horizon $T'$ and run the algorithm for $T'$ iterations. After running $T'$ iterations, update the guess $T' \leftarrow 2T'$,  restart the algorithm with $\btheta_0 = \bzero$, and set the stepsizes according to the new $T'$. This procedure repeats indefinitely. Now, at any moment in time, we have a number of models that were trained with an increasing number of samples, each one of them with the proposed learning rate, and each one of them starting from the zero vector. We might also have a model that is currently training, but it has not finished. We stress that these different models are independent of each other. So, it is enough to return the solution of the run that completed and used the most samples. If the samples used by this model are large enough, our convergence rate will apply to it.

What is the price that we pay in this way? Very minor: in the worst case, the solution we return was trained with at least half of the training samples. So, the rate will degrade only by a very small constant factor.
Clearly, the factor '2' is also arbitrary and can be replaced by any factor greater than 1.

\end{document}